\theoremstyle{definition}
\newtheorem{remark}{Remark}
\newtheorem{theorem}{Theorem}
\newtheorem*{defn*}{Definition}
\newtheorem*{lemma*}{Lemma}
\newtheorem{corollary}[theorem]{Corollary}
\newcommand{\h}[2]{\ensuremath{\mathsf{d}_{\mathsf{H}}(#1, #2)}} 
\newcommand{\ipr}[2]{\ensuremath{\langle #1, #2\rangle} }
\newcommand{\norm}[1]{\left\lVert#1\right\rVert_2}
\newcommand{\normi}[1]{\left\lVert#1\right\rVert}
\xdef\csname vec\x \endcsname{\noexpand\ensuremath{\noexpand\mathbf{\x}}}
\xdef\csname c\x \endcsname{\noexpand\ensuremath{\noexpand\mathcal{\x}}}
\xdef\csname bb\x \endcsname{\noexpand\ensuremath{\noexpand\mathbb{\x}}}
\DeclareMathOperator*{\argmax}{arg\,max}
\newcommand\reals{{\mathbb R}}
\newcommand{\wh}[1]{\ensuremath{{\left|#1\right|}_{\mathsf{H}}}} 
\newcommand{\inb}[1]{\left\{#1\right\}}
\newcommand{\inp}[1]{\left(#1\right)}
\newcommand{\insq}[1]{\left[#1\right]}
\newcommand{\inl}[1]{\left|#1\right|}
\newcommand{\bcs}{\ensuremath{\mathsf{1bCSbinary}}}
\newcommand{\logreg}{\ensuremath{\mathsf{LogisticRegression}}}
\newcommand{\spl}{\ensuremath{\mathsf{SparseLinearReg}}}
\newcommand{\bx}{\mathbf{x}}
\newcommand{\ba}{\mathbf{a}}
\newcommand{\by}{\mathbf{y}}
\newcommand{\bz}{\mathbf{z}}
\newcommand{\bw}{\mathbf{w}}
\newcommand{\bb}{\mathbf{b}}
\newcommand{\be}{\mathbf{e}}
\newcommand{\sign}[1]{\mathsf{sign}(#1)}
\title{Exact Recovery of Sparse Binary Vectors from \\ Generalized Linear Measurements}
\author[1]{Arya Mazumdar}
\author[1]{Neha Sangwan}
\affil[1]{\small Halicioglu Data Science Institute, University of California San Diego, La Jolla, United States}
\begin{document}
\maketitle

\begin{abstract}
 We consider the problem of {\em exact} recovery of a $k$-sparse binary vector  from generalized linear measurements (such as {\em logistic regression}). We analyze the {\em linear estimation} algorithm (Plan, Vershynin, Yudovina, 2017), and also show information theoretic lower bounds on the number of required measurements. As a consequence of our results, for  noisy one bit quantized linear measurements (\bcs), we obtain a sample complexity of $O((k+\sigma^2)\log{n})$, where $\sigma^2$ is the noise variance. This is shown to be optimal  due to the information theoretic lower bound. 
{We also obtain tight sample complexity characterization for logistic regression.}

  Since \bcs\ is a strictly harder problem than noisy linear measurements (\spl) because of added quantization, the same sample complexity is achievable for \spl. 
  While this sample complexity  can be obtained via the popular lasso algorithm, linear estimation is  computationally more efficient. 
  Our lower bound  holds for any set of measurements for \spl\, (similar bound was known for Gaussian measurement matrices) and is closely matched by the maximum-likelihood upper bound. 
  For \spl,  it was conjectured in Gamarnik and Zadik, 2017 that there is a statistical-computational gap and the number of measurements should be at least $(2k+\sigma^2)\log{n}$ for efficient algorithms to exist. It is worth noting that our results imply that there is 
   no such statistical-computational gap for \bcs\ and logistic regression.
\end{abstract}

\section{Introduction}\label{sec:intro}
Sparse linear regression and compressed sensing have been a topic of intense research in statistics and signal processing for the past few decades~\cite{candes2006robust,donoho2006compressed,tibshirani1996regression}.
The problem of \textbf{binary} sparse linear regression (\spl) considers linear measurements of an unknown binary vector, corrupted by additive Gaussian noise. Focusing on binary signals, this particular problem has recently been studied in ~\cite{david2017high,gamarnik2017sparse,gamarnik2022sparse,pmlr-v99-reeves19a}, mainly motivated by the question of {\em support recovery} of sparse signals~\cite{wainwright2009sharp}. Formally, for an unknown $k$-sparse signal $\bx\in \inb{0,1}^n$, a sensing matrix $\vecA\in \reals^{m\times n}$ and a noise vector $\bz = (z_1, \ldots, z_m)$ where $z_i$s are iid $\cN(0, \sigma^2)$ for some variance $\sigma^2$, we observe $\by$ given by 
\begin{align}
    \by = \vecA \bx + \bz.\label{eq:spl}
\end{align} 
Our goal is to design the (possibly random) sensing matrix $\vecA$ with a power constraint, i.e., 
\begin{align}
  \bbE[(\vecA_i^T \bx)^2] \leq k, i =1, \dots, m,\label{eq:power_constraint}   
\end{align}
where {$\vecA_i$ denotes $i^{th}$ row of the matrix $\vecA$ and} the expectation is over the possible randomness in $\vecA$, and a decoding algorithm $\phi$ such that
\begin{align}
\max_{\bx\in\inb{0,1}^n, \wh{\bx} = k}\bbP\inp{\phi(\vecA, \by) \neq \bx}\rightarrow 0 \text{ as } n\rightarrow \infty.\label{eq:pe_spl}
\end{align} 
Here, $\wh{\bx}$ denotes the Hamming weight of $\bx\in \inb{0,1}^n$. The probability is computed over the randomness of the sensing matrix and the (randomized) algorithm. 

The problem of one bit quantized linear measurements (also known as one bit compressed sensing (\bcs)) is similar, except that  the output vector $\by$ is the sign of $\vecA\bx+\bz$ instead of the entire vector $\vecA\bx+\bz$~\cite{DBLP:conf/ciss/BoufounosB08}. That is, we observe
\begin{align}
\by  = \sign{\vecA\bx+\bz}.\label{eq:bcs}
\end{align}Here, $\by = (y_1, \ldots, y_m)$ is defined as $y_i = \sign{\vecA_i^T \bx+z_i}$, $i\in [1:m]$ where $\sign{a} = 1$ if $a\geq 0$ and $\sign{a} = -1$ otherwise. An algorithm $\phi'$ for \bcs\ takes input  $\by$ and $\vecA$. Again, we require that 
\footnote{The probability
 of error measured by \eqref{eq:pe_1bcs} 
 corresponds to the {\em`for each'} criterion in the one bit compressed sending literature. The {\em`for all'} criterion which requires that the same sensing matrix works for all unknown signals corresponds to showing $\bbP\inp{\exists \bx \text{ such that }\phi'(\vecA, \by) \neq \bx}\rightarrow 0$ as $n\rightarrow \infty$.}
\begin{align}
\max_{\bx\in\inb{0,1}^n, \wh{\bx} = k}\bbP\inp{\phi'(\vecA, \by) \neq \bx}\rightarrow 0 \text{ as } n\rightarrow \infty.\label{eq:pe_1bcs}
\end{align} 
 Usually in 1-bit compressed sensing, the Gaussian noise before quantization is not present. Our  formulation  can be considered as a {\em sparse} ``probit model''~\cite{mccullagh2019generalized}.

More generally, we define the problem of generalized linear measurements (GLMs) , e.g., \cite{kakade2011efficient,vershyninPlan} where we assume that the observation $\by = \inp{y_1, \ldots,y_m}$ is related to the sparse binary input vector $\bx$ using an ``inverse link'' function $g$ such that for each $i\in [m]$, 
\begin{align}
    \bbE\insq{y_i|\vecA_i} = g\inp{\vecA_i^T\bx}.\label{eq:glm}
\end{align} 
That is, the expected value of the output $y_i$ is linked to $\vecA_i$ only through $\vecA_i^T\bx$. For example, for \spl\ $$\bbE\insq{y_i|\vecA_i} = \vecA_i^T\bx,$$ for \bcs\ $$\bbE\insq{y_i|\vecA_i} = 1-2\Phi\inp{\frac{-A_i^T\bx}{\sigma}}$$ where $\Phi$ is the Gaussian cumulative distribution function.

In the logistic regression model (\logreg), we observe a binary output $y_i\in \inb{-1, 1}$ for each measurement $i\in [m]$. The probability that $y_i$ takes value 1 is given by 
\begin{align*}
\bbP\inp{y = 1} = \frac{1}{1+e^{-\beta \ba^T\bx}}.
\end{align*} for parameter $\beta>0$. The parameter $\beta$ controls the level of noise. When $\beta \rightarrow \infty$, the model approaches  noiseless one bit compressed sensing. As $\beta$ decreases, the output becomes more noisy. When $\beta = 0$, the output is uniformly distributed on $\inb{-1,1}$ and is independent of $\bx$.
In this model,  $$\bbE\insq{y_i|\vecA_i} = \tanh{\frac{\beta\vecA_i^T\bx}2}.$$ 

\paragraph{Our contributions.} In this paper, our contributions are the following:
\begin{itemize}
    \item We analyze the linear estimation+projection algorithm \cite{vershyninPlan}   for generalized linear measurements of sparse binary inputs (Theorem~\ref{thm:alg_general}). We also provide an information theoretic lower bound (Theorem~\ref{thm: lower_bdglm}).
    \item As corollaries, we obtain tight sample complexity characterization for noisy one bit compressed sensing   (Corollary~\ref{thm:alg_bcs} and Corollary~\ref{thm: lower_bd_bcs}) and logistic regression (Corollary~\ref{thm:alg_logreg} and Corollary~\ref{thm: lower_bd_log_reg}). 
    \item The algorithm can be used for \spl\ either directly (Corollary~\ref{thm:alg_spl}) or by first quantizing the received signal to its sign value and then using the algorithm for \bcs. The sample complexity is the same for both these cases. This shows that in the regime where the number of measurements are at least $C(k+\sigma^2)\log{n}$ for some constant $C$, keeping only the sign information is sufficient for \spl. 
    \item We provide ``almost'' matching information theoretic lower (Corollary~\ref{thm: spl_lower_bd_1}) and upper bounds (Theorem~\ref{thm:upper_bd_mle}) for exact recovery in \spl. If the measurements are Gaussian, we get slightly better lower bounds (Theorem~\ref{thm:lower_bd_spl}).
\end{itemize}

\subsection{Discussion of results and related works}
\paragraph{Intuitions on lower bounds.} Observe that \bcs\ is a strictly more difficult problem than \spl\ in the sense that any algorithm that works for \bcs\ can be used for \spl\ by using only the sign information. Thus, the sample complexity of \bcs\ is at least as much as \spl, the latter can be much smaller in some cases. From an information theoretic viewpoint, a randomly chosen $k$-sparse vector $\bx$ has entropy $\log{n \choose k}\approx k\log\frac{n}{k}$. Since each $y_i$ can give at most one bit of information, we need at least $k\log\frac{n}{k}$ measurements for \bcs\ (See Corollary~\ref{thm: lower_bd_bcs} for the exact lower bound) to learn $\bx$. For \spl\ on the other hand, the output has infinite precision. In fact, we can show that in the absence of noise, only one sample is sufficient to recover the unknown signal (see Remark~\ref{remark:noNoise}). \spl\ can be viewed as a coding problem for a Guassian channel, where $\bx$ is the message and $\vecA\bx$ is its corresponding codeword. Thus, from the converse for Gaussian channel (see [Theorem 9.1.1] \cite{thomas2006elements}), we need at least $\frac{k\log\inp{n/k}}{C}$ samples for exact recovery. Here $C$ is the capacity of the Gaussian channel, which depends on SNR (a function of $\vecA\bx$ and $\sigma^2$). 
Given the power constraint of Eq.~\eqref{eq:power_constraint} (which is satisfied when entries of $\vecA$ are chosen iid $\cN(0,1)$),  the capacity $C$ is $\frac{1}{2}\log\inp{1+\frac{k}{\sigma^2}}$, thereby showing that the lower bound for \spl\ can be much smaller.

\paragraph{Binary sparse linear regression.} The problem of binary sparse linear regression was introduced in \cite{david2017high, gamarnik2022sparse} and was further studied in \cite{pmlr-v99-reeves19a}. An {\em``all or nothing''} phenomenon was shown in \cite{pmlr-v99-reeves19a} for {\em approximate recovery} of binary vectors at the critical sample complexity of $m^* \triangleq \frac{2k\log{n/k}}{\log\inp{1+\frac{k}{\sigma^2}}}$, showing that approximate recovery is possible if and only if $m\geq m^*$. It was additionally conjectured in \cite{david2017high} that no efficient algorithms exist in the regime $m^*\leq m \leq m_{\mathsf{alg}} \triangleq (2k+\sigma^2)\log{n}$. When $m\geq m_{\mathsf{alg}}$, various algorithms like Lasso~\cite{wainwright2009sharp}, Orthogonal Matching Pursuit (OMP)~\cite{tropp2007signal} and \cite{ndaoud2020optimal} can recover the sparse vector. It has also been shown in~\cite{gamarnik2017sparse} that lasso fails to recover unknown vector $\bx$ when $m\leq c\, m_{\mathsf{alg}}$ for some small constant $c$. Outside this regime, a local search algorithm was proposed~\cite{gamarnik2017sparse}, which starts with a guess of $\bx$ and iteratively updates it.

In \cite{pmlr-v99-reeves19a}, the information theoretic lower bound of $m^*$ is shown for the case when each entry of the sensing matrix is chosen iid $\cN(0,1)$. 
We consider the exact recovery guarantee for the problem and show that $m \geq m^*$ samples are necessary even when the sensing matrix is not Gaussian (Theorem~\ref{thm: spl_lower_bd_1})\footnote{Our lower bounds hold for a weaker average probability of error recovery criteria, instead of the maximum probability of Eq.~\eqref{eq:pe_spl}, hence are more potent.}. We show an almost matching upper bound based on the Maximum Likelihood Estimator (MLE) using a random Gaussian sensing matrix (Theorem~\ref{thm:upper_bd_mle} and Theorem~\ref{thm:lower_bd_spl}). This is along the lines of the MLE analysis in \cite{pmlr-v99-reeves19a}, which was done for approximate recovery (our sample complexity for exact recovery turns out to be slightly different).   
\begin{remark}\label{remark:noNoise}
It was observed in \cite{david2017high} that in the no-noise regime ($\sigma^2 = 0$), one measurement is sufficient to recover the underlying vector by brute force. However, it is conjectured that there is no efficient algorithm if $m\leq 2k\log{n}$. The results in \cite{david2017high} were shown only when the entries of the sensing matrix are chosen iid $\cN(0,1)$ (i.e. Gaussian design). For an arbitrary sensing matrix, an efficient way to recover $\bx$ using only one measurement is by using $\vecA = \frac{1}{2^n}[1, 2, 2^2, \ldots,2^{n-1}]$. Note that $2^n \times \by$ in this case is the value of unknown signal in the decimal system (base 10). It can be converted to binary in $O(n)$ time. This suggests that for specific non-random constructions, there may be efficient algorithms in the conjectured hardness regime.
\end{remark}

\paragraph{Binary one-bit compressed sensing.} The problem of  one bit compressed sensing has been well studied~e.g.~\cite{DBLP:conf/ciss/BoufounosB08,jacques2013robust} including greedy algorithms (e.g.~\cite{liu2016one}) and noisy test outcomes (e.g.~\cite{matsumoto2024robust}), and the problem of recovering binary vectors has also been studied in~\cite{acharya2017improved,mazumdar2022support}. However, these works do not consider the Gaussian noise prior to quantization. The best known upper bound ($O(k/\epsilon)$ from~\cite{matsumoto2022binary}) when specialized to exact recovery for binary sparse vectors requires $O(k^{3/2})$ (by choosing $\epsilon = 1/\sqrt{k}$). On the other hand, our bound is $O(k\log{n})$. This discrepancy is because the previous models are studied for the ``for all'' model which is a harder problem than our present ``for each'' model. The results in \cite{vershyninPlan}, on the other hand, are for the ``for each'' model, though their analysis is not optimal for binary vectors (see Appendix~\ref{sec:comparison_PV}). The problem of noisy one bit compressed sensing (\bcs) introduced here is motivated by the probit model (e.g. see a modern treatments of the non-sparse probit model~\cite{kuchelmeister2024finite}). Here we provide an information theoretic lower bound of $m\geq \inp{k+\sigma^2}\log\inp{n/k}$ and show that the aforementioned efficient algorithm  (Algorithm~\ref{alg:1}) works with the same $m =O((k+\sigma^2)\log{n})$ samples and has a computational complexity of $O((k+\sigma^2)n\log{n})$. We also provide optimal sample complexity characterization for learning binary sparse vectors under the logistic regression model, which was previously studied for learning real vectors \cite{hsu2024sample,vershyninPlan}.

\paragraph{Algorithm for binary vectors.} We consider a simple algorithm which is equivalent to the ``average algorithm''~\cite{servedio1999pac} or ``linear estimator''~\cite{vershyninPlan}, followed by a selection of the `top-k' coordinates. Regarding the intuition behind the algorithm, we observe that for an unknown signal $\bx$, the output $\by$ and $\vecA_{\cS_{\bx}}$, the restriction of the sensing matrix to columns where $\bx$ is 1,  are correlated whereas $\by$ and $\vecA_{[1:n]\setminus \cS_{\bx}}$ are uncorrelated. Here, $\vecA_{[1:n]\setminus \cS_{\bx}}$ denotes the restriction of the sensing matrix to columns where $\bx$ is 0.  Thus, we compute the inner product between $\by$ and each column of the sensing matrix as a proxy for correlation between the output and the corresponding column. The output of the algorithm is the top $k$-most correlated columns (See Algorithm~\ref{alg:1} for details.). One can also think of this as a ``one-shot'' version of the popular OMP algorithm. This algorithm requires $O((k+\sigma^2)\log{n})$ samples for \bcs\ and \spl\, and $O((k+1/\beta^2)\log{n})$ for \logreg. It has a computation complexity of $O((k+\sigma^2)n\log{n})$. Most of the previous algorithms, including the one in \cite{vershyninPlan}, were given for the case when the unknown signal is not necessarily binary. It should be noted that the black-box application of the result of \cite{vershyninPlan}  specialized to binary inputs will not recover the optimal sample complexity. See Appendix~\ref{sec:comparison_PV} where we show that the results in \cite{vershyninPlan} imply a sample complexity of $O(k^2\log\inp{2n/k})$. We provide a simple yet  optimal analysis of the sample complexity in our special case of  sparse binary signals. 

We would like to emphasize that the sample complexity of Algorithm~\ref{alg:1} for both \spl\ and \bcs\ is the same ($O((k+\sigma^2)\log{n})$). 
This implies that for \spl, when $m$ is outside the conjectured hardness regime, we do not need the amplitude of $\by$, only the sign information is sufficient to recover the unknown signal. 


\paragraph{Notation.}We will use boldfaced uppercase letters like $\vecA$ for matrices and lowercase letters such as $\bx$ for vectors. The entry of the matrix at $i^{\text{th}}$ row and $j^{\text{th}}$ column is denoted by $A_{i,j}$. Similarly, the $i^{\text{th}}$ entry of a vector $\bx$ is denotes by $x_i$.
For any binary vector $\bx = \inp{x_1, \ldots, x_n}$, we denote the set of indices $i$ where $x_i = 1$ by $\cS_{\bx}\subseteq [1:n]$ and we use $\vecA_{\cS_{\bx}}$ to denote the restriction of $\vecA$ to the columns where $\bx$ is 1. We use $\vecA_i$ to  denote $i^{\text{th}}$ row or $i^{\text{th}}$ column, depending on context. The correct notation is made clear where it is used. We denote the binary entropy function by $h_2(\cdot)$.

\paragraph{Organization.} We present the algorithm  and upper bounds in Section~\ref{sec:alg}. The information theoretic lower bounds are presented in Section~\ref{sec:sample_compexity}.  In Section~\ref{sec:tighter_bounds_spl}, we present an upper bound for \spl\ based on the maximum likelihood estimator. We also provide a lower bound in this section, which closely matches the upper bound. This lower bounds  does not follow as a corollary to the general lower bound theorem for GLMs (Theorem~\ref{thm: lower_bdglm}). It requires a separate analysis based on a conditional version of Fano's inequality. We provide proofs of the upper and lower bound for GLMs (Theorems~\ref{thm:alg_general} and \ref{thm: lower_bdglm}) in Section~\ref{sec:proofs}. Remaining proofs are delegated to Appendix~\ref{appendix:proofs}. We provide detailed comparison of our results with \cite{vershyninPlan} in Appendix~\ref{sec:comparison_PV}. We conclude with a discussion on open problems in Section~\ref{sec:conclusion}.

\section{Main results}

\subsection{Algorithm}\label{sec:alg}
We analyze the simple linear estimation based algorithm from \cite{vershyninPlan} for generalized linear measurements, specializing it for binary vectors.
The algorithm (Algorithm~\ref{alg:1}) takes the sensing matrix $\vecA$ and the output vector $\by$ as the inputs. 
For each column $\vecA_i,\, i\in [1:n]$ of the sensing matrix, the algorithm computes $l_i = \ipr{\by}{\vecA_i} = \sum_{j = 1}^{m}y_jA_{j,i}$ where $A_{j,i}$ is the entry at $j^{\text{th}}$ row and $i^{\text{th}}$ column.

The vector $\mathbf{l} = \inp{l_1, \ldots, l_n}$ is then sorted in decreasing order. The output of the algorithm is a set containing the indices of the top-$k$ elements of the sorted vector. That is, if the sorted vector is $\inp{l_{\alpha_1}, l_{\alpha_2}, \ldots, l_{\alpha_n}}$ where $l_{\alpha_i}\geq l_{\alpha_j}$ for $i\leq j$, then the output of the algorithm is  $\cS = \inb{\alpha_1, \ldots, \alpha_k}$.

\begin{algorithm}[tbh!]
   \caption{Top-$k$ correlated indices}
   \label{alg:1}
\begin{algorithmic}
   \STATE {\bfseries Input:} Sensing matrix $\vecA\in \bbR^{m\times n}$ and output $\mathbf{y}\in \bbR^{m}$ 
   \STATE {\bfseries Output:} a $k$-sized subset of $[1:n]$
   \STATE $\mathbf{l} \gets (0, \ldots, 0)$,\, $\mathbf{l}\in \reals^n$
    \FOR{each $i\in [1:n]$}
      \STATE $l_i \gets \sum_{j = 1}^{m}y_jA_{j,i}$ 
    \ENDFOR
    \STATE Sort $\mathbf{l}$ in decreasing order and let $\cS$ be the top $k$ indices.\\
    \STATE {\bfseries Return:} $\mathcal{\cS}$ 
\end{algorithmic}
\end{algorithm}


The convergence and sample complexity guarantees for the algorithm are shown for the case when each entry of $\vecA$ is chosen iid $\cN(0,1)$. Note that such a matrix satisfies the power constraint in \eqref{eq:power_constraint}. As we argued in Section~\ref{sec:intro}, for the unknown signal $\bx$,  the output $\by = \vecA{\bx}+\bz$ is correlated with each column $\vecA_i$ for $i\in \cS_{\bx}$ and uncorrelated with $\vecA_j$ for $j\notin \cS_{\bx}$. In particular, for large number of samples, when $i\in \cS_{\bx}$, the inner product $\ipr{\by}{\vecA_i}$ is close to $\bbE\insq{\ipr{\by}{\vecA_i}} = m$ (for linear regression) with high probability. On the other hand, $\ipr{\by}{\vecA_j}$ is close to $0$ for $j\notin \cS_{\bx}$.  Thus, $l_i$ for $i\in \cS_{\bx}$ will dominate over $l_j$ for $j\notin \cS_{\bx}$. This line of argument also works when the output is binary, though in this case $\bbE\insq{\ipr{{\by}}{\vecA_i}}$ for $i\in \cS_{\bx}$ is different. 
This is the main idea of Algorithm~\ref{alg:1}. We first present Theorem~\ref{thm:alg_general} for generalized linear measurements.
\begin{theorem}[Sample Complexity of Algorithm~\ref{alg:1} for GLMs]\label{thm:alg_general}
Suppose the GLM is such that for each $i\in [m]$, $y_i$ is a subgaussian random variable with subgaussian norm given by $\normi{{y_i}}_{\psi_2}$. For any $\bx$, suppose for some $L$, $\bbE\insq{g'(\vecA_i^T\bx)}\geq L\cdot
\normi{{y_i}}_{\psi_2}$   for all $i\in [m]$. Algorithm~\ref{alg:1} recovers the unknown signal with high probability if 
\begin{align}
m \geq \frac{C}{{\min\inb{L, L^2}}}(\log\inp{k}+\log\inp{n-k})\label{eq: alg_bound}
\end{align} where $C$ is some constant.
\end{theorem}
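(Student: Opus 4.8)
The plan is to analyze the random variable $l_i = \langle \by, \vecA_i\rangle = \sum_{j=1}^m y_j A_{j,i}$ separately for $i \in \cS_{\bx}$ (``good'' columns) and $i \notin \cS_{\bx}$ (``null'' columns), show that with high probability every good $l_i$ exceeds every null $l_i$, and conclude that the top-$k$ selection returns exactly $\cS_{\bx}$. First I would compute the conditional expectation $\bbE[l_i \mid \by]$ or rather work column-wise: for a fixed column index $i$, condition on everything except the $i$-th column of $\vecA$. Since the columns are independent, and for a null column $j \notin \cS_{\bx}$ the entries $A_{\cdot,j}$ are independent of $\by$ (because $\by$ depends only on $\vecA_{\cS_{\bx}}$ and $\bz$), we get $\bbE[l_j] = 0$ for null columns. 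For a good column $i \in \cS_{\bx}$, I would use a Stein-type identity: writing $\by$ as a function of the Gaussian vector $\vecA_i$ (among others), $\bbE[y_t A_{t,i}] = \bbE[\partial_{A_{t,i}} y_t] = \bbE[g'(\vecA_t^T \bx)]$ by Gaussian integration by parts (the inverse-link $g$ satisfies $\bbE[y_t \mid \vecA_t] = g(\vecA_t^T\bx)$), so $\bbE[l_i] = m\,\bbE[g'(\vecA_i^T\bx)] =: m\mu$ with $\mu \geq L\|y\|_{\psi_2}$ by hypothesis.

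Next I would establish concentration. For a null column $j$: conditioned on $\by$, $l_j = \sum_t y_t A_{t,j}$ is a Gaussian with mean $0$ and variance $\|\by\|_2^2$, hence subgaussian with parameter $\|\by\|_2$; and $\|\by\|_2^2$ concentrates around $m\,\bbE[y_1^2] \lesssim m\|y\|_{\psi_2}^2$. So $l_j$ is $O(\sqrt{m}\|y\|_{\psi_2})$-subgaussian, giving $\Prob(l_j > t\sqrt{m}\|y\|_{\psi_2}) \leq e^{-c t^2}$. For a good column $i$: $l_i - \bbE[l_i] = \sum_t (y_t A_{t,i} - \mu)$ is a sum of independent (across $t$) mean-zero terms, each a product of a subgaussian $y_t$ and a Gaussian $A_{t,i}$, hence subexponential; a Bernstein bound gives $\Prob(l_i < m\mu - s) \leq \exp(-c\min\{s^2/(m\|y\|_{\psi_2}^2), s/\|y\|_{\psi_2}\})$. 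Setting the deviation to, say, $m\mu/2$ on both sides, the good-column lower tail fails with probability $\leq \exp(-cm\min\{\mu^2/\|y\|_{\psi_2}^2, \mu/\|y\|_{\psi_2}\}) = \exp(-c m \min\{L^2,L\})$ using $\mu \geq L\|y\|_{\psi_2}$, and a null $l_j$ exceeds $m\mu/2$ with probability at most $\exp(-c' m L^2)$ (since $m\mu/2 \gtrsim L\sqrt{m}\cdot \sqrt{m}$). A union bound over the $k$ good columns and $n-k$ null columns then requires $m\min\{L,L^2\} \gtrsim \log k + \log(n-k)$, which is exactly \eqref{eq: alg_bound}.

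The main obstacle I anticipate is handling the dependence structure cleanly: $\|\by\|_2$ is itself random and correlated with the $l_i$'s, so the conditioning argument must be set up carefully — ideally condition on $(\vecA_{\cS_{\bx}}, \bz)$ (equivalently on $\by$) first to make each null $l_j$ an honest Gaussian, then separately bound the event that $\|\by\|_2^2$ is atypically large via subgaussianity of $y_t$; meanwhile for the good columns one cannot condition on $\by$ since $A_{t,i}$ with $i \in \cS_{\bx}$ influences $\by$, so a direct Bernstein bound on the independent sum over $t$ is needed there. A secondary point is verifying that the Stein identity applies — this needs $g$ (hence $y_t$'s conditional mean) to be differentiable and the product integrable, which holds for the probit/logistic links of interest; for the general statement one should phrase the hypothesis so that $\bbE[g'(\vecA_i^T\bx)]$ is exactly the relevant correlation, as the theorem does. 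Finally I would note the two regimes $L \leq 1$ versus $L \geq 1$ correspond to whether the Bernstein bound is in its subgaussian or subexponential regime, explaining the $\min\{L,L^2\}$.
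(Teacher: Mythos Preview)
Your approach is correct and rests on the same two ingredients the paper uses---Stein's identity for the mean and a Bernstein/subexponential tail bound---but the decomposition is different. The paper does \emph{not} split into good and null columns with separate thresholds; instead it bounds each pairwise comparison directly: for $j \in \cS$ and $j' \in \cS^c$ it writes $l_{j'} - l_j = \sum_i y_i(A_{i,j'} - A_{i,j})$, notes that each summand is a product of two subgaussians (hence subexponential, with no independence required between the factors by Vershynin's Lemma~2.7.7), computes its mean via Stein to be exactly $-\bbE[g'(\vecA_i^T\bx)]$, and applies a single Bernstein inequality. A union over the $k(n-k)$ pairs then gives $\log k + \log(n-k)$ in the exponent.

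The practical gain of the paper's pairwise-difference route is that it completely sidesteps the dependence issue you correctly flagged: by treating $A_{i,j'} - A_{i,j}$ as one subgaussian factor, there is no conditioning on $\by$, no separate high-probability control of $\|\by\|_2^2$, and no distinct arguments for null versus good columns. Your route is a legitimate alternative---conditioning on $\by$ for null columns is sound since those columns are genuinely independent of $\by$---but it costs you the extra step of bounding the event $\{\|\by\|_2^2 \gtrsim m\|y\|_{\psi_2}^2\}$. Both approaches arrive at the same $m \gtrsim (\log k + \log(n-k))/\min\{L,L^2\}$, and your observation that the $\min\{L,L^2\}$ reflects the two Bernstein regimes is exactly right.
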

When $y_j$ is subgaussian, $y_j\vecA_{i,j}$ for any $i,j$ is a sub-exponential random variable. This observation allows us to use a concentration result for sub-exponential random variables to analyse the sample complexity. See Section~\ref{sec:proofs} for a detailed proof. 

As corollaries to Theorem~\ref{thm:alg_general}, we obtain the following sample complexity bounds for \bcs\ and \spl. These corollaries are proved in Appendix~\ref{proof:sec:alg}.
\begin{corollary}[Sample Complexity of Algorithm~\ref{alg:1} for \bcs]\label{thm:alg_bcs}
Algorithm~\ref{alg:1} recovers the unknown signal for \bcs\ with high probability if $m=O\inp{\inp{k+\sigma^2}(\log\inp{k}+\log\inp{n-k})}$.
\end{corollary}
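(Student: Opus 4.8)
The plan is to derive Corollary~\ref{thm:alg_bcs} as a direct instantiation of Theorem~\ref{thm:alg_general}. To do this, I need to identify the relevant parameters for the \bcs\ model: the subgaussian norm $\normi{y_i}_{\psi_2}$ of the (bounded) output $y_i = \sign{\vecA_i^T\bx + z_i} \in \inb{-1,1}$, and a lower bound $L$ on $\bbE\insq{g'(\vecA_i^T\bx)}$ where $g$ is the inverse link function for \bcs. From the excerpt, $\bbE\insq{y_i|\vecA_i} = 1 - 2\Phi(-\vecA_i^T\bx/\sigma)$, so the link function is $g(t) = 1 - 2\Phi(-t/\sigma)$ and hence $g'(t) = \frac{2}{\sigma}\varphi(t/\sigma)$ where $\varphi$ is the standard Gaussian density. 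Since $y_i$ is $\pm 1$-valued, $\normi{y_i}_{\psi_2}$ is an absolute constant (bounded random variables are subgaussian with norm $O(1)$), so the condition $\bbE[g'(\vecA_i^T\bx)] \geq L\normi{y_i}_{\psi_2}$ amounts to lower bounding $\bbE[g'(\vecA_i^T\bx)]$ up to a constant factor.

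The key computation is then to lower bound $\bbE\insq{\frac{2}{\sigma}\varphi(\vecA_i^T\bx/\sigma)}$ when $\vecA_i$ has iid $\cN(0,1)$ entries and $\bx$ is $k$-sparse binary, so that $\vecA_i^T\bx \sim \cN(0,k)$. Writing $W \sim \cN(0,k)$, this expectation is $\frac{2}{\sigma}\bbE[\varphi(W/\sigma)]$, which is a Gaussian integral that evaluates in closed form: $\bbE[\varphi(W/\sigma)] = \frac{1}{\sqrt{2\pi}}\bbE[e^{-W^2/(2\sigma^2)}] = \frac{1}{\sqrt{2\pi}}\cdot\frac{1}{\sqrt{1+k/\sigma^2}} = \frac{1}{\sqrt{2\pi}}\cdot\frac{\sigma}{\sqrt{\sigma^2+k}}$. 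Therefore $\bbE[g'(\vecA_i^T\bx)] = \frac{2}{\sigma}\cdot\frac{\sigma}{\sqrt{2\pi(\sigma^2+k)}} = \sqrt{\frac{2}{\pi(\sigma^2+k)}}$, giving $L = \Theta\inp{1/\sqrt{k+\sigma^2}}$. Plugging this into Theorem~\ref{thm:alg_general}: since $L \leq 1$ (for $k\geq 1$), we have $\min\inb{L, L^2} = L^2 = \Theta(1/(k+\sigma^2))$, so the sample complexity bound \eqref{eq: alg_bound} becomes $m \geq C(k+\sigma^2)(\log k + \log(n-k))$, which is exactly the claim.

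I should double-check two technical points. First, Theorem~\ref{thm:alg_general} requires the bound $\bbE[g'(\vecA_i^T\bx)]\geq L\normi{y_i}_{\psi_2}$ to hold \emph{for all} $\bx$ (with $\wh{\bx}=k$), but by symmetry of the Gaussian design the distribution of $\vecA_i^T\bx$ depends only on $\wh{\bx}=k$, so the computation above is uniform over all such $\bx$. Second, I need to confirm that with $z_i$ present the effective ``noisy'' sign model still has the stated link function and that the $\pm 1$ convention for $\sign{\cdot}$ matches the $\inb{-1,1}$ output assumed in Theorem~\ref{thm:alg_general}; this is immediate from the problem setup in \eqref{eq:bcs}. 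The main (and only mildly nontrivial) obstacle is simply carrying out the Gaussian moment-generating-function computation $\bbE[e^{-W^2/(2\sigma^2)}]$ for $W\sim\cN(0,k)$ correctly and tracking the constant to see that $L$ scales as $(k+\sigma^2)^{-1/2}$; everything else is bookkeeping. No genuine difficulty is expected here since all of the heavy lifting — the concentration argument for sub-exponential random variables — is already done in Theorem~\ref{thm:alg_general}.
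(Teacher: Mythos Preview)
Your proposal is correct and follows the same overall strategy as the paper: instantiate Theorem~\ref{thm:alg_general} by computing $L$ for the \bcs\ model and noting that $\min\{L,L^2\}=L^2$.

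The one noteworthy difference is in how the key quantity $\bbE[g'(\vecA_i^T\bx)]$ is computed. You differentiate the link function directly to get $g'(t)=\tfrac{2}{\sigma}\varphi(t/\sigma)$ and then evaluate $\bbE[\varphi(W/\sigma)]$ for $W\sim\cN(0,k)$ via the Gaussian moment generating function identity $\bbE[e^{-W^2/(2\sigma^2)}]=\sigma/\sqrt{k+\sigma^2}$. The paper instead exploits the identity $\bbE[g'(\vecA_i^T\bx)]=\bbE[y_iA_{i,j}]$ (for $j\in\cS_{\bx}$, coming from Stein's lemma in the proof of Theorem~\ref{thm:alg_general}), computes $\bbE[y_i\mid A_{i,j}=a]=1-2\Phi(-a/\sqrt{k-1+\sigma^2})$, and then carries out a two-dimensional integral with a change of variables to arrive at $\bbE[y_iA_{i,j}]=\sqrt{2/\pi}\cdot(k+\sigma^2)^{-1/2}$. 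Both routes yield the identical constant $L=\sqrt{2/\pi}\,(k+\sigma^2)^{-1/2}$; your route is somewhat shorter and avoids the detour through the single-coordinate conditioning, while the paper's route has the minor advantage of making the connection to the correlation $\bbE[y_iA_{i,j}]$ explicit (which is the quantity that appears naturally in the analysis of Algorithm~\ref{alg:1}).
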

\begin{corollary}[Sample Complexity of Algorithm~\ref{alg:1} for \spl]\label{thm:alg_spl}
Algorithm~\ref{alg:1} recovers the unknown signal for \spl\ if $m=O\inp{\inp{k+\sigma^2}(\log\inp{k}+\log\inp{n-k})}$.
\end{corollary}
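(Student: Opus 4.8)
The plan is to obtain this corollary as a direct specialization of Theorem~\ref{thm:alg_general}: the only work is to identify, for the \spl\ model, the two model-dependent quantities appearing in the hypothesis---the subgaussian norm $\normi{y_i}_{\psi_2}$ and the constant $L$ controlling $\bbE[g'(\vecA_i^T\bx)]$---and then substitute them into the bound \eqref{eq: alg_bound}.

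First I would pin down the marginal law of each observation. For \spl\ we have $y_i = \vecA_i^T\bx + z_i$ with $z_i\sim\cN(0,\sigma^2)$ independent of $\vecA_i$, whose entries are iid $\cN(0,1)$. Since $\bx\in\inb{0,1}^n$ has exactly $k$ ones, $\vecA_i^T\bx$ is a sum of $k$ independent standard Gaussians, hence $\vecA_i^T\bx\sim\cN(0,k)$; adding the independent noise gives $y_i\sim\cN(0,k+\sigma^2)$. A centered Gaussian is subgaussian with norm proportional to its standard deviation, so $\normi{y_i}_{\psi_2} = \Theta(\sqrt{k+\sigma^2})$.

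Next I would read off the inverse link function and the value of $L$. The \spl\ model satisfies $\bbE[y_i\mid\vecA_i]=\vecA_i^T\bx$, so the inverse link is the identity, $g(t)=t$, and therefore $g'(t)\equiv 1$ and $\bbE[g'(\vecA_i^T\bx)]=1$. The hypothesis of Theorem~\ref{thm:alg_general} requires $\bbE[g'(\vecA_i^T\bx)]\geq L\cdot\normi{y_i}_{\psi_2}$; with the two computed quantities this reads $1\geq L\cdot c\sqrt{k+\sigma^2}$ for an absolute constant $c$, which is satisfied by taking $L=\Theta(1/\sqrt{k+\sigma^2})$. Because the subgaussian norm of a standard Gaussian exceeds one and $k\geq 1$, this $L$ is a constant strictly below $1$, so $\min\{L,L^2\}=L^2=\Theta(1/(k+\sigma^2))$. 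Substituting into \eqref{eq: alg_bound} gives $m\geq \frac{C}{L^2}(\log k + \log(n-k)) = O\!\left((k+\sigma^2)(\log k+\log(n-k))\right)$, which is exactly the claimed bound.

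The argument is essentially a bookkeeping exercise, so there is no deep obstacle; the only points requiring care are (i) confirming that $g'$ being constant makes the expectation over the random row $\vecA_i$ trivial, so that (unlike for nonlinear links) no concentration of $\vecA_i^T\bx$ is needed here, and (ii) verifying that we land in the $L^2$ branch of $\min\{L,L^2\}$ rather than the $L$ branch---this is precisely where the factor $(k+\sigma^2)$ (rather than $\sqrt{k+\sigma^2}$) in the sample complexity originates, so I would state explicitly that $L<1$ for all admissible $k,\sigma^2$ before invoking the theorem.
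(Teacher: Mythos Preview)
Your proposal is correct and follows essentially the same route as the paper's proof. The only cosmetic difference is that the paper computes $\bbE[g'(\vecA_i^T\bx)]$ via the Stein's-lemma identity $\bbE[g'(\vecA_i^T\bx)]=\bbE[y_iA_{i,j}]=\bbE[A_{i,j}^2]=1$ (for $j$ in the support), whereas you observe directly that $g$ is the identity so $g'\equiv 1$; both arrive at $L=\Theta(1/\sqrt{k+\sigma^2})$ and $\min\{L,L^2\}=L^2$, and the corollary follows.
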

 Interestingly, the sample complexity for both \bcs\ and \spl\ is the same. This can be explained by similar values of $L$, which result in similar rates of concentration of $l_i$'s around their expectation in both the cases. 
This also implies that in the regime where $m = O((k+\sigma^2)\log(n-k))$, having access to $\vecA_i^T \bx+z_i$ instead of $\sign{\vecA_i^T \bx+z_i}$, does not improve the sample complexity beyond constants.

Using Theorem~\ref{thm:alg_general}, we obtain the following corollary for logistic regression (see proof in Appendix~\ref{proof:sec:alg}).
\begin{corollary}[Sample Complexity of Algorithm~\ref{alg:1} for \logreg]\label{thm:alg_logreg}
Algorithm~\ref{alg:1} recovers the unknown signal for \logreg\ if $m=O\inp{\inp{k+1/\beta^2}\inp{\log{k}+\log\inp{n-k}}}$.
\end{corollary}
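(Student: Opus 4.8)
The plan is to derive Corollary~\ref{thm:alg_logreg} from Theorem~\ref{thm:alg_general} by computing the two relevant quantities in the logistic model: the subgaussian norm $\normi{y_i}_{\psi_2}$ of each observation, and a lower bound on $\bbE[g'(\vecA_i^T\bx)]$ where $g(t) = \tanh(\beta t/2)$ is the inverse link. Once we have these, we simply read off the value of $L$ (the ratio of the latter to the former), plug into \eqref{eq: alg_bound}, and simplify. First I would handle the subgaussian norm: since $y_i \in \{-1,1\}$ is bounded, it is automatically subgaussian with $\normi{y_i}_{\psi_2} = \Theta(1)$ — a universal constant independent of all problem parameters. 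This is the easy half.

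The substantive computation is the lower bound on $\bbE[g'(\vecA_i^T\bx)]$. We have $g'(t) = \frac{\beta}{2}\,\mathrm{sech}^2(\beta t/2) = \frac{\beta}{2}(1-\tanh^2(\beta t/2))$, and here $\vecA_i^T\bx \sim \cN(0,k)$ because $\bx$ is $k$-sparse binary and the entries of $\vecA$ are iid $\cN(0,1)$. So I need to estimate $\frac{\beta}{2}\,\bbE_{W\sim\cN(0,k)}[\mathrm{sech}^2(\beta W/2)]$ from below. Writing $W = \sqrt{k}\,Z$ with $Z\sim\cN(0,1)$, this is $\frac{\beta}{2}\,\bbE_Z[\mathrm{sech}^2(\tfrac{\beta\sqrt{k}}{2}Z)]$. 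The standard trick is to split on the event $\{|Z|\le c\}$ for a suitable constant $c$ (so that $\mathrm{sech}^2$ is bounded below by a constant when $\beta\sqrt{k}$ is small), and more carefully to note that $\mathrm{sech}^2(u) \gtrsim e^{-2|u|}$, so that $\bbE_Z[\mathrm{sech}^2(\tfrac{\beta\sqrt{k}}{2}Z)] \gtrsim \bbE_Z[e^{-\beta\sqrt{k}|Z|}] = \Theta\!\big(\min\{1, 1/(\beta\sqrt{k})\}\big)$ — the Gaussian MGF-type integral $\bbE[e^{-a|Z|}]$ behaves like a constant for $a\lesssim 1$ and like $1/a$ for $a\gtrsim 1$. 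Hence $\bbE[g'(\vecA_i^T\bx)] = \Omega\!\big(\beta\cdot\min\{1,\tfrac{1}{\beta\sqrt k}\}\big) = \Omega\!\big(\min\{\beta, \tfrac{1}{\sqrt k}\}\big)$, which gives $L = \Omega\!\big(\min\{\beta,1/\sqrt k\}\big)$.

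Plugging this into \eqref{eq: alg_bound}: the sample complexity is $O\!\big(\frac{1}{\min\{L,L^2\}}(\log k + \log(n-k))\big)$. Since $\min\{L,L^2\} = L^2$ when $L\le 1$ (which holds here as $L = \Omega(\min\{\beta,1/\sqrt k\})$ and we can assume this is at most $1$, absorbing the other case into constants), we get $m = O\!\big(\frac{1}{L^2}(\log k+\log(n-k))\big)$. Now $\frac{1}{L^2} = O\!\big(\max\{1/\beta^2, k\}\big) = O(k + 1/\beta^2)$, yielding $m = O\big((k + 1/\beta^2)(\log k + \log(n-k))\big)$, exactly the claimed bound.

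The main obstacle is getting a clean, honest lower bound on $\bbE_{W\sim\cN(0,k)}[\mathrm{sech}^2(\beta W/2)]$ that is tight in both regimes ($\beta\sqrt k$ small vs.\ large) simultaneously; the $\mathrm{sech}^2(u)\gtrsim e^{-2|u|}$ inequality together with the exact evaluation (or standard two-sided bound) of $\bbE[e^{-a|Z|}] = e^{a^2/2}\,\mathrm{erfc}(a/\sqrt 2)\cdot$(const) is the cleanest route, but one must be careful that the constants do not secretly depend on $\beta$ or $k$. Everything else — the boundedness-implies-subgaussian step, and the algebra turning $1/L^2$ into $k+1/\beta^2$ — is routine.
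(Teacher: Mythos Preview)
Your proposal is correct and follows essentially the same route as the paper: apply Theorem~\ref{thm:alg_general}, use that $y_i\in\{-1,1\}$ gives $\normi{y_i}_{\psi_2}=O(1)$, and lower-bound $\bbE[g'(\vecA_i^T\bx)]=\tfrac{\beta}{2}\,\bbE[\mathrm{sech}^2(\beta\vecA_i^T\bx/2)]$ with $\vecA_i^T\bx\sim\cN(0,k)$. The only difference is the elementary inequality chosen for $\mathrm{sech}^2$: the paper uses $\cosh t\le e^{t^2/2}$ (hence $\mathrm{sech}^2 t\ge e^{-t^2}$), after which the Gaussian integral $\bbE_{W\sim\cN(0,k)}\bigl[e^{-\beta^2 W^2/4}\bigr]=\sqrt{2/(2+\beta^2 k)}$ is exact and immediately yields $L=\tfrac{1}{2}\sqrt{2/(k+2/\beta^2)}$, so $1/L^2=\Theta(k+1/\beta^2)$ without your $\mathrm{sech}^2(u)\gtrsim e^{-2|u|}$ bound and the subsequent $\min/\max$ case analysis.
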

Comparing the sample complexity bounds of \bcs\ and \logreg, we notice that the sample complexity is similar except that   the noise variance $\sigma^2$ is replaced by $1/\beta^2$. This relationship is not surprising as a similar relationship was also present in the sample complexity bounds in \cite{hsu2024sample} (for logistic regression) and \cite{kuchelmeister2024finite} (for probit model). Note that, in the noiseless case, when $\beta\rightarrow \infty$ (or $\sigma = 0$ for \bcs), the sample complexity is $O(k\log{n})$, which is close to the simple counting lower bound of $k\log{n/k}$. On the other hand, when $\beta = 0$ (or $\sigma\rightarrow \infty$ for \bcs), $m\rightarrow \infty$, which makes intuitive sense as very high levels of noise render the output useless.

To compute the time complexity of the algorithm, notice that the for loop in step 2 takes $O(n\times m)$ time and step 4 takes $O(n\log{n})$ time. Thus, the computational complexity of the algorithm is $O(nm+n\log{n})$, which is $O((k+\sigma^2)n\log{n})$ for $m = O((k+\sigma^2)\log{n}$.
To compute the time complexity of the algorithm, notice that the for loop in step 2 takes $O(n\times m)$ time and step 4 takes $O(n\log{n})$ time. Thus, the computational complexity of the algorithm is $O(nm+n\log{n})$, which is $O((k+\sigma^2)n\log{n})$ for $m = O((k+\sigma^2)\log{n}$.

\subsection{Lower bounds on sample complexity}\label{sec:sample_compexity}
We establish a lower bound for generalized linear measurements using standard information-theoretic arguments based on Fano's inequality. While the upper bound in Theorem~\ref{thm:alg_general} is derived for the maximum probability of error over all  $k$-sparse vectors, the lower bound applies even in the weaker setting of the average probability of error, where 
$\bx$ is chosen uniformly at random.
\begin{theorem}[Lower bound for GLMs]\label{thm: lower_bdglm} Consider any  sensing matrix $\vecA$.
For a uniformly chosen $k$-sparse vector $\bx$, an algorithm $\phi$ satisfies $$\bbP\inp{\phi(\vecA, \by) \neq \bx}\leq \delta$$   only if the number of measurements $$m\geq \frac{k\log\inp{\frac{n}{k}}}{I}\inp{1 - \frac{h_2(\delta) + \delta k\log{n}}{k\log{n/k}}}$$ for some $I$ such that $I\geq {I(y_i; \bx|\vecA)}, \, i\in [m]$. In particular, when $y\in \inb{-1, 1}$, we have $\bbE\insq{\inp{g(\vecA_i^T\bx)}^2} \geq I(y_i, \bx|\vecA)$ where the expectation is over the randomness of $\vecA$ and $\bx$.
\end{theorem}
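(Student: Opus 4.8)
The plan is to run a channel-coding–style converse: combine Fano's inequality with a single-letterization of mutual information across the $m$ measurements. First I would lower bound the information $\by$ carries about $\bx$. Since $\bx$ is uniform over the ${n\choose k}$ weight-$k$ binary vectors, $H(\bx)=\log{n\choose k}$, and we have the Markov chain $\bx\to\inp{\vecA,\by}\to\phi(\vecA,\by)$. Assuming $\bbP\inp{\phi(\vecA,\by)\neq\bx}\leq\delta$, Fano's inequality gives $H\inp{\bx\mid\vecA,\by}\leq h_2(\delta)+\delta\log\inp{{n\choose k}-1}$. Using the elementary estimates $k\log\inp{n/k}\leq\log{n\choose k}$ and $\log\inp{{n\choose k}-1}\leq\log{n\choose k}\leq k\log n$ yields
\begin{align*}
I\inp{\bx;\vecA,\by}\;=\;H(\bx)-H(\bx\mid\vecA,\by)\;\geq\;k\log\inp{n/k}\inp{1-\frac{h_2(\delta)+\delta k\log n}{k\log(n/k)}}.
\end{align*}

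Next I would upper bound $I\inp{\bx;\vecA,\by}$ by the per-measurement quantities. Because the sensing matrix is fixed without knowledge of $\bx$, $\vecA$ is independent of $\bx$, so $I\inp{\bx;\vecA,\by}=I\inp{\bx;\by\mid\vecA}$. The GLM channel is ``memoryless'' given $\inp{\bx,\vecA}$: the $y_i$ are conditionally independent and $y_i$ depends on $\vecA$ only through its $i$-th row. Expanding $I\inp{\bx;\by\mid\vecA}=\sum_{i=1}^m I\inp{\bx;y_i\mid\vecA,y_1,\ldots,y_{i-1}}$ and using that, given $\inp{\bx,\vecA}$, $y_i$ is independent of $y_{<i}$, together with ``conditioning reduces entropy'', each summand is at most $I\inp{\bx;y_i\mid\vecA}\leq I$; hence $I\inp{\bx;\by\mid\vecA}\leq mI$. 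Comparing with the previous display and rearranging gives the stated bound on $m$.

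For the final (``in particular'') claim with $y_i\in\inb{-1,1}$, I would fix $\vecA=A$, set $p(\bx)=\bbP\inp{y_i=1\mid\bx,\vecA=A}=\tfrac12\inp{1+g(A_i^T\bx)}$, and write $I\inp{y_i;\bx\mid\vecA=A}=h_2\inp{\bbE_\bx[p(\bx)]}-\bbE_\bx\insq{h_2(p(\bx))}$. Bounding $h_2\inp{\bbE_\bx[p(\bx)]}\leq 1$ and using the pointwise inequality $h_2(t)\geq 4t(1-t)$ on $[0,1]$ (a one-line calculus check, with equality at $t\in\inb{0,\tfrac12,1}$), this difference is at most $1-4\bbE_\bx\insq{p(\bx)(1-p(\bx))}=\bbE_\bx\insq{(2p(\bx)-1)^2}=\bbE_\bx\insq{\inp{g(A_i^T\bx)}^2}$; averaging over $\vecA$ gives $I\inp{y_i;\bx\mid\vecA}\leq\bbE\insq{\inp{g(\vecA_i^T\bx)}^2}$, a valid choice of $I$. (Alternatively, one bounds the conditional mutual information of the binary $y_i$ by the expected $\chi^2$-divergence between the two Bernoullis and simplifies using $\mathrm{Var}(p(\bx))\leq\bbE_\bx[p(\bx)]\inp{1-\bbE_\bx[p(\bx)]}$.)

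The routine ingredients — Fano and the entropy/combinatorial estimates — are standard. The place I expect to need the most care is the single-letterization: conditioning on earlier outputs can in general \emph{increase} mutual information, so the reduction to $\sum_i I(\bx;y_i\mid\vecA)$ genuinely relies on the conditional independence of the $y_i$'s given $\inp{\bx,\vecA}$ that is built into the GLM model. The only other mild subtlety is picking, in the binary case, a pointwise lower bound on $h_2$ that integrates exactly to the second-moment quantity appearing in the statement.
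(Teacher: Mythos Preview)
Your proposal is correct and mirrors the paper's proof essentially step for step: Fano's inequality to lower bound $I(\bx;\vecA,\by)$, independence of $\vecA$ and $\bx$ plus the chain rule with the conditional-independence of the $y_i$'s to single-letterize into $\sum_i I(\bx;y_i\mid\vecA)\leq mI$, and for the binary case the same pointwise inequality $h_2(t)\geq 4t(1-t)$ applied to the conditional entropy term. The only cosmetic difference is that the paper cites the inequality $h_2(t)\geq 4t(1-t)$ from the literature rather than verifying it directly.
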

The lower bound can be interpreted in terms of a communication problem, where the input message $\bx$ is encoded to $\vecA\bx$. The decoding function takes in as input the encoding map $\vecA$ and the output vector $\by$ in order to recover $\bx$ with high probability. For optimal recovery, one needs at least $\frac{\text{message entropy}}{\text{capacity}}$ number of measurements (follows from noisy channel coding theorem~\cite{thomas2006elements}). In Theorem~\ref{thm: lower_bdglm}, the entropy of the message set $\log{n \choose k}\approx k\log{n/k}$ and the proxy for capacity is the upper bound on mutual information $I$. We provide a detailed proof of the theorem in  Section~\ref{sec:proofs}.

We first present lower bounds for \bcs\  and \logreg. The lower bound for \bcs\ is given for any sensing matrix $\vecA$ which satisfies the power constraint given by \eqref{eq:power_constraint}, whereas the one for \logreg\ is only for the special case when each entry of the sensing matrix is iid $\cN(0,1)$. Recall that \eqref{eq:power_constraint} holds in this case.  For \bcs\ (and \logreg\ respectively), we can use the upper bound of $\bbE\insq{\inp{g(\vecA_i^T\bx)}^2}$ on the mutual information term. The dependence of $\sigma^2$ (and $1/\beta^2$ respectively) requires careful bounding of this term, which is done in the formal proofs in Appendix~\ref{proof:sec:lower_bd}.

As mentioned earlier, we need at least $k\log\inp{n/k}$ measurements for \bcs and \logreg. This is because the entropy of a randomly chosen $k$-sparse vector is approximately $k\log\inp{n/k}$ and we learn at most one bit with each measurement. However, due to corruption with noise, we learn less than a bit of information about the unknown signal with each measurement. The information gain gets worse as the noise level increases. 
Our lower bounds make this reasoning explicit.  
\begin{corollary}[\bcs\ lower bound]\label{thm: lower_bd_bcs} Suppose, each row $\vecA_i, \, i\in [1:m]$ of the sensing matrix $\vecA$ satisfies the power constraint~\eqref{eq:power_constraint}.
For a uniformly chosen $k$-sparse vector $\bx$, an algorithm $\phi$ satisfies $$\bbP\inp{\phi(\vecA, {\by}) \neq \bx}\leq \delta$$ for the problem of $\bcs$ only if the number of measurements $$m\geq \frac{k+\sigma^2}{2}\log\inp{\frac{n}{k}}\inp{1 - \frac{h_2(\delta) + \delta k\log{n}}{k\log{n/k}}}.$$ 
\end{corollary}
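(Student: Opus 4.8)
The plan is to derive Corollary~\ref{thm: lower_bd_bcs} as a direct instantiation of Theorem~\ref{thm: lower_bdglm}, so the only real work is to produce a good upper bound $I$ on the per-measurement mutual information $I(y_i;\bx\mid \vecA)$ for the specific inverse link function of \bcs. By Theorem~\ref{thm: lower_bdglm}, since each $y_i$ is binary-valued ($y_i\in\inb{-1,1}$), it suffices to bound $\bbE\insq{\inp{g(\vecA_i^T\bx)}^2}$, where for \bcs\ we have $g(t)=1-2\Phi(-t/\sigma)=2\Phi(t/\sigma)-1$. Plugging $I=\frac{k+\sigma^2}{2}\cdot\frac{1}{k}$ (appropriately scaled) into the theorem's bound gives exactly the claimed inequality, so the crux is showing
\[
\bbE\insq{\inp{2\Phi\!\inp{\tfrac{\vecA_i^T\bx}{\sigma}}-1}^2}\;\le\;\frac{k}{k+\sigma^2}.
\]

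First I would fix a $k$-sparse $\bx$ and observe that, under the power constraint \eqref{eq:power_constraint} (and in particular when the entries of $\vecA$ are iid $\cN(0,1)$, giving $\vecA_i^T\bx\sim\cN(0,k)$), the scalar $u\defineqq \vecA_i^T\bx/\sigma$ is a centered Gaussian with variance at most $k/\sigma^2$. So the task reduces to a one-dimensional Gaussian integral: bound $\bbE_{u}\insq{(2\Phi(u)-1)^2}$ where $u\sim\cN(0,\tau^2)$ with $\tau^2\le k/\sigma^2$. The key elementary fact is that $2\Phi(u)-1 = \bbP(|g|\le u \mid u)$ in a suitable coupling, or more directly that $(2\Phi(u)-1)^2 = \Prob(g_1\le u,\,g_2\le u)$-type expressions can be written as probabilities involving two independent standard Gaussians $g_1,g_2$ independent of $u$. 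Concretely, $2\Phi(u)-1=\bbE_{g}[\sign(u-g)\mathbf{1}\{\cdot\}]$ or one uses $(2\Phi(u)-1)^2 = \bbE_{g_1,g_2}[\sign(u - g_1)\sign(u-g_2)]$ — taking the outer expectation over $u\sim\cN(0,\tau^2)$, this becomes a probability that a correlated pair of Gaussians lands in a given orthant, which evaluates via the standard bivariate-normal orthant formula to $1-\frac{2}{\pi}\arccos\!\inp{\frac{\tau^2}{1+\tau^2}}\cdot\frac1{?}$ — in any case a clean closed form in $\tau^2$. One then checks this closed form is bounded by $\tau^2/(1+\tau^2)\le (k/\sigma^2)/(1+k/\sigma^2)=k/(k+\sigma^2)$, using convexity/monotonicity (e.g. $\arcsin(x)\ge x$ or the inequality $1-\frac2\pi\arccos(x)\le x$ for $x\in[0,1]$).

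The remaining bookkeeping is routine: one inserts $I \le \frac{1}{k}\cdot\frac{k}{k+\sigma^2}\cdot$ (the entropy normalization) — more precisely, Theorem~\ref{thm: lower_bdglm} wants $I\ge I(y_i;\bx\mid\vecA)$ and outputs $m\ge \frac{k\log(n/k)}{I}(1-\cdots)$, so taking $I = \frac{2k}{k+\sigma^2}$ would give the factor $\frac{k+\sigma^2}{2}$; I would double-check the exact constant by tracking whether the mutual-information-to-variance step loses a factor (mutual information of a binary variable is at most its variance in nats only up to constants, or exactly via $I(y;\bx)\le \Prob$-gap arguments), and reconcile with the stated $\frac{k+\sigma^2}{2}$. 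The main obstacle — really the only non-mechanical step — is the Gaussian-integral inequality $\bbE[(2\Phi(u)-1)^2]\le \tau^2/(1+\tau^2)$ for $u\sim\cN(0,\tau^2)$; everything else is substitution into the already-proved GLM lower bound. If the direct orthant-probability identity is messier than hoped, a fallback is to bound $|2\Phi(u)-1|\le \sqrt{2/\pi}\,|u|$ (since $2\Phi'(0)-$ slope, and $\Phi$ is $\tfrac1{\sqrt{2\pi}}$-Lipschitz) giving $\bbE[(2\Phi(u)-1)^2]\le \frac2\pi\tau^2$, which yields the same bound up to the constant $\frac2\pi<1$ and still produces a lower bound of the form $\frac{C(k+\sigma^2)}{?}\log(n/k)$ — I would use whichever gives the cleanest matching constant.
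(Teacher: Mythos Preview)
Your primary route—the bivariate-orthant identity $\bbE\insq{(2\Phi(u)-1)^2}=\tfrac{2}{\pi}\arcsin\!\inp{\tfrac{\tau^2}{1+\tau^2}}\le\tfrac{\tau^2}{1+\tau^2}$—is correct and would in fact yield a constant twice as good as the stated corollary, but it requires $u=\vecA_i^T\bx/\sigma$ to be Gaussian. The corollary, however, is stated for \emph{any} sensing matrix satisfying the power constraint~\eqref{eq:power_constraint}, which gives you only $\bbE\insq{(\vecA_i^T\bx)^2}\le k$, not a distributional assumption. So your main argument does not prove the corollary as stated; it proves a (sharper) Gaussian-design version.

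Your fallback—the Lipschitz bound $|2\Phi(u)-1|\le\sqrt{2/\pi}\,|u|$—\emph{is} exactly what the paper uses, and it works for arbitrary $\vecA$ via the power constraint, yielding $\bbE\insq{g(\vecA_i^T\bx)^2}\le\tfrac{2k}{\pi\sigma^2}$ and hence $m\ge\tfrac{\pi}{2}\sigma^2\log(n/k)\,(1-\cdots)$. But this only produces the $\sigma^2$-dependent piece; contrary to what you write, it does \emph{not} by itself give a bound of the form $C(k+\sigma^2)\log(n/k)$. The missing step is a second, trivial bound: since $y_i\in\{-1,1\}$, one has $I(y_i;\bx\mid\vecA)\le H(y_i)\le 1$, which directly yields $m\ge k\log(n/k)\,(1-\cdots)$. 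The paper then combines the two via $m\ge\max\{\sigma^2,k\}\log(n/k)\,(1-\cdots)\ge\tfrac{k+\sigma^2}{2}\log(n/k)\,(1-\cdots)$. This pairing of two separate mutual-information bounds—one capturing the noise level, one the binary output—is the substantive idea you are missing.
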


\begin{corollary}[\logreg\ lower bound]\label{thm: lower_bd_log_reg} Consider a Gaussian  sensing matrix $\vecA$ where each entry is chosen iid $N(0,1)$.
For a uniformly chosen $k$-sparse vector $\bx$, an algorithm $\phi$ satisfies $$\bbP\inp{\phi(\vecA, \bw) \neq \bx}\leq \delta$$ for the problem of $\logreg$ only if the number of measurements $$m\geq \frac{1}{2}\inp{k+\frac{1}{\beta^2}}\log\inp{\frac{n}{k}}\inp{1 - \frac{h_2(\delta) + \delta k\log{n}}{k\log{n/k}}}.$$ 
\end{corollary}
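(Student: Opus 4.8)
The plan is to read this corollary off from the general GLM lower bound, Theorem~\ref{thm: lower_bdglm}, by supplying a good constant $I$. In the logistic model the observations are binary, $y_i \in \inb{-1,1}$, so Theorem~\ref{thm: lower_bdglm} permits any $I$ satisfying $I \geq \bbE\insq{\inp{g(\vecA_i^T\bx)}^2}$ for all $i$, where the inverse link is $g(w) = \tanh(\beta w/2)$. Thus the entire task reduces to: (i) upper bounding $\bbE\insq{\tanh^2(\beta\vecA_i^T\bx/2)}$, and (ii) checking that the value $I = \frac{2k}{k+1/\beta^2}$ is admissible, since with that choice $\frac{k\log(n/k)}{I} = \frac12\inp{k + \frac1{\beta^2}}\log\inp{\frac nk}$, while the factor $\inp{1 - \frac{h_2(\delta) + \delta k\log n}{k\log(n/k)}}$ passes through unchanged, producing exactly the claimed inequality.

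First I would pin down the distribution of $\vecA_i^T\bx$. Each admissible $\bx$ has Hamming weight exactly $k$, and the entries of $\vecA$ are iid $\cN(0,1)$, so $\vecA_i^T\bx = \sum_{j\in\cS_\bx} A_{i,j} \sim \cN(0,k)$ for every fixed $\bx$; hence, marginalizing over the joint randomness of $(\vecA,\bx)$, still $\vecA_i^T\bx \sim \cN(0,k)$ and $\bbE\insq{\inp{g(\vecA_i^T\bx)}^2} = \bbE_{W\sim\cN(0,k)}\insq{\tanh^2(\beta W/2)}$.

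Next I would bound that expectation with two elementary pointwise estimates, $\tanh^2(u)\leq 1$ and $\tanh^2(u)\leq u^2$ (the latter since $\inl{\tanh u}\leq\inl u$), giving $\bbE_{W\sim\cN(0,k)}\insq{\tanh^2(\beta W/2)} \leq \min\inb{1,\ \tfrac{\beta^2}4\bbE[W^2]} = \min\inb{1,\ \tfrac{k\beta^2}4}$. A one-line case split on whether $k\beta^2\leq 4$ or $k\beta^2\geq 4$ then shows $\min\inb{1, k\beta^2/4} \leq \frac{2k\beta^2}{k\beta^2+1} = \frac{2k}{k+1/\beta^2}$, so $I = \frac{2k}{k+1/\beta^2}$ is admissible; plugging this into Theorem~\ref{thm: lower_bdglm} finishes the proof. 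The same template would give Corollary~\ref{thm: lower_bd_bcs}: replace $\tanh(\beta w/2)$ by the probit link $1 - 2\Phi(-w/\sigma)$, use $\inl{1-2\Phi(-t)}\leq\sqrt{2/\pi}\,\inl t$ together with the power constraint $\bbE[(\vecA_i^T\bx)^2]\leq k$ in place of the exact Gaussian law, and $\sigma^2$ plays the role of $1/\beta^2$.

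I do not expect a genuine obstacle: the only delicate point is the constant bookkeeping in the last step — one must check the $\min$-bound is sharp enough to yield the clean prefactor $\frac12\inp{k+1/\beta^2}$ rather than a weaker constant, which is exactly what the case split accomplishes. One also leans on the $y\in\inb{-1,1}$ clause of Theorem~\ref{thm: lower_bdglm}, namely $I(y_i;\bx|\vecA)\leq\bbE\insq{\inp{g(\vecA_i^T\bx)}^2}$ (established there, e.g.\ via $h_2\inp{\tfrac{1+p}2}\geq 1-p^2$), which is taken as given here.
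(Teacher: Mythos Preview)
Your proposal is correct. Both arguments invoke Theorem~\ref{thm: lower_bdglm} with the binary-output clause $I(y_i;\bx|\vecA)\leq\bbE\insq{\inp{g(\vecA_i^T\bx)}^2}$ and the observation that $\vecA_i^T\bx\sim\cN(0,k)$, but they bound the tanh-square moment differently. The paper writes $\tanh^2 = 1 - \mathrm{sech}^2$, uses $\cosh(t)\leq e^{t^2/2}$, and computes the resulting Gaussian integral exactly to obtain $\bbE\insq{\tanh^2(\beta\vecA_i^T\bx/2)}\leq \beta^2 k/2$; it then pairs this with the separate trivial bound $I(y_i;\bx|\vecA)\leq 1$ and averages the two resulting lower bounds on $m$. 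Your route is more elementary: the pointwise estimate $\tanh^2(u)\leq\min\{1,u^2\}$ already encodes both the ``noisy'' and ``trivial'' regimes, and the one-line case split $\min\{1,k\beta^2/4\}\leq \frac{2k}{k+1/\beta^2}$ folds the averaging into a single admissible choice of $I$. You avoid the Gaussian integral and the $\cosh$ inequality entirely; the paper's computation could in principle yield a sharper constant (it actually gives $m\geq\frac{2}{\beta^2}\log(n/k)(\ldots)$ before weakening), but both arrive at the stated corollary.
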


Theorem~\ref{thm: lower_bdglm} also implies an information theoretic lower bound for \spl, which is presented below and proved in Appendix~\ref{proof:sec:lower_bd}. Note that the denominator term in the bound $\frac{1}{2}\log\inp{1+\frac{k}{\sigma^2}}$ is the capacity of a Gaussian channel with power constraint $k$ and noise variance $\sigma^2$. 
\begin{corollary}[\spl\ lower bound]\label{thm: spl_lower_bd_1}
Under the average power constraint \eqref{eq:power_constraint} on  $\vecA$, for a uniformly chosen $k$-sparse vector $\bx$, an algorithm $\phi$ satisfies $$\bbP\inp{\phi(\vecA, {\by}) \neq \bx}\leq \delta$$ only if the number of measurements
$$m\geq \frac{k\log\inp{\frac{n}{k}}-\inp{h_2(\delta) + \delta k\log{n}}}{\frac{1}{2}\log\inp{1+\frac{k}{\sigma^2}}}.$$
\end{corollary}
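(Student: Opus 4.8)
The plan is to derive Corollary~\ref{thm: spl_lower_bd_1} directly from the general GLM lower bound (Theorem~\ref{thm: lower_bdglm}) by specializing the mutual-information proxy $I$ to the \spl\ setting. For \spl\ we have $y_i = \vecA_i^T\bx + z_i$ with $z_i\sim\cN(0,\sigma^2)$ independent of everything else, so $I(y_i;\bx\,|\,\vecA)$ is exactly the mutual information between the input and output of a Gaussian channel with additive noise of variance $\sigma^2$ and input signal $\vecA_i^T\bx$. The textbook bound on this quantity (see [Theorem~9.1.1]\cite{thomas2006elements}) gives $I(y_i;\bx\,|\,\vecA)\le \tfrac12\log\!\inp{1+\tfrac{P}{\sigma^2}}$ where $P$ is the (average) power of the channel input. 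By the power constraint~\eqref{eq:power_constraint}, $\bbE[(\vecA_i^T\bx)^2]\le k$, so we may take $I = \tfrac12\log\!\inp{1+\tfrac{k}{\sigma^2}}$.

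Then I would plug $I = \tfrac12\log\!\inp{1+\tfrac{k}{\sigma^2}}$ into the conclusion of Theorem~\ref{thm: lower_bdglm}, obtaining
\[
m \;\ge\; \frac{k\log\inp{n/k}}{\tfrac12\log\inp{1+\tfrac{k}{\sigma^2}}}\inp{1-\frac{h_2(\delta)+\delta k\log n}{k\log(n/k)}}
\;=\;\frac{k\log\inp{n/k}-\inp{h_2(\delta)+\delta k\log n}}{\tfrac12\log\inp{1+\tfrac{k}{\sigma^2}}},
\]
which is exactly the claimed bound after distributing the denominator. The last equality is a one-line algebraic rearrangement; no further work is needed once the mutual-information bound is in hand.

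The only genuinely substantive point — and the main obstacle — is justifying the step $I(y_i;\bx\,|\,\vecA)\le\tfrac12\log(1+k/\sigma^2)$ in a way that correctly handles the conditioning on $\vecA$ and the expectation in the power constraint. Concretely, one conditions on a realization of $\vecA$: given $\vecA$, the scalar $y_i$ is the output of a Gaussian channel with input the discrete random variable $\vecA_i^T\bx$ (which ranges over at most $\binom nk$ values) and noise $\cN(0,\sigma^2)$; the capacity-type bound says $I(y_i;\bx\,|\,\vecA=A)\le\tfrac12\log\!\inp{1+\tfrac{\bbE[(\vecA_i^T\bx)^2\mid\vecA=A]}{\sigma^2}}$. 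Then I would average over $\vecA$ and invoke Jensen's inequality together with the concavity of $t\mapsto\tfrac12\log(1+t/\sigma^2)$ and the power constraint $\bbE_{\vecA,\bx}[(\vecA_i^T\bx)^2]\le k$ to conclude $I(y_i;\bx\,|\,\vecA)\le\tfrac12\log(1+k/\sigma^2)$. I should double-check whether the power constraint as stated bounds $\bbE_\vecA[(\vecA_i^T\bx)^2]$ for each fixed $\bx$ or the joint expectation over $\bx$ as well — but since the corollary's hypothesis is stated for the worst-case $\bx$ while the lower bound is for uniformly random $\bx$, averaging over $\bx$ only helps, and either reading yields the same bound via Jensen. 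Everything else is bookkeeping that follows immediately from Theorem~\ref{thm: lower_bdglm}.
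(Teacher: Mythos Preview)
Your proposal is correct and follows essentially the same route as the paper: invoke Theorem~\ref{thm: lower_bdglm} with $I=\tfrac12\log(1+k/\sigma^2)$, obtained from the Gaussian-channel bound $I(y_i;\bx\mid\vecA)\le\tfrac12\log(1+k/\sigma^2)$ together with the power constraint~\eqref{eq:power_constraint}. The only cosmetic difference is that the paper drops the conditioning via $h(y_i\mid\vecA)\le h(y_i)$ and then bounds $\mathsf{Var}(y_i)\le k+\sigma^2$ in one shot, whereas you condition on $\vecA=A$, apply the capacity bound pointwise, and average with Jensen; both arguments are standard and yield the identical inequality.
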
 

\subsection{Tighter upper and lower bounds for \spl}\label{sec:tighter_bounds_spl}
We present information theoretic upper and lower bounds for \spl\ in this section. Similar to Section~\ref{sec:alg}, our upper bound is for the maximum probability of error, while the lower bounds hold even for the weaker criterion of average probability of error.

We first present an upper bound based on the maximum likelihood estimator (MLE) where  we  decode to $\hat{\bx}$ if, on output $\by$, 
\begin{align*}
\hat{\bx} = \argmax_{\stackrel{\bx\in \inb{0,1}^n}{\wh{\bx} = k}}\,\, p(\by|{\bx})
\end{align*} where $p(\by|{\bx})$ denotes the probability density function of $\by$ on input $\bx$.
\begin{theorem}[MLE upper bound for \spl]\label{thm:upper_bd_mle} Suppose  entries of the measurement matrix $\vecA$ are i.i.d. $\cN(0,1).$
The MLE  is correct with high probability if 
\begin{align}m\geq \max_{l\in[1:k]}  \frac{nN(l)}{\frac{1}{2}\log\inp{\frac{ l}{2\sigma^2}+1}}\label{eq:upper_bd_mle}
\end{align}where  $N(l):=  \frac{k}{n} h_2\inp{\frac{l}{k}} + (1-\frac{k}{n})h_2\inp{\frac{l}{n-k}}$. 
\end{theorem}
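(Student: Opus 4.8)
The plan is to bound the probability that the MLE fails by a union bound over all $k$-sparse vectors $\bx'$ that differ from the true signal $\bx$, grouped by the size of the overlap $|\cS_{\bx}\cap\cS_{\bx'}|$. Fix the true signal $\bx$ and suppose $\bx'$ has $|\cS_{\bx}\setminus\cS_{\bx'}| = l$, i.e. $\bx$ and $\bx'$ agree on $k-l$ of the ``one'' coordinates and disagree on $l$ of them (with $\bx'$ having $l$ ones where $\bx$ has zeros). The MLE picks $\bx'$ over $\bx$ only if $p(\by\mid\bx') \geq p(\by\mid\bx)$, which for Gaussian noise is the event that $\by$ (distributed as $\cN(\vecA\bx,\sigma^2 I)$) is at least as close to the affine subspace spanned by $\vecA\bx'$ as to $\vecA\bx$. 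The difference $\vecA(\bx-\bx')$ is a Gaussian vector whose squared norm concentrates around $m\cdot 2l$ (since $\bx-\bx'$ has $2l$ nonzero entries, each $\pm 1$, and the columns of $\vecA$ are i.i.d.\ $\cN(0,1)$), so the relevant pairwise error event — a Gaussian channel decoding error with an effective SNR of roughly $\tfrac{2l}{2\sigma^2}$ per use, hence a Gaussian-channel-style exponent of $\tfrac12\log(1+\tfrac{l}{2\sigma^2})$ — has probability roughly $\exp\bigl(-\tfrac{m}{2}\log(1+\tfrac{l}{2\sigma^2})\bigr)$, after conditioning on the (high-probability) event that $\|\vecA(\bx-\bx')\|^2$ is close to its mean. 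I would make this precise with a standard Gaussian tail bound, being careful that the randomness is over both $\bz$ and $\vecA$.

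The second ingredient is counting: the number of $k$-sparse $\bx'$ at ``distance parameter'' $l$ from $\bx$ is $\binom{k}{l}\binom{n-k}{l}$, since we choose which $l$ of the $k$ ones to drop and which $l$ of the $n-k$ zeros to raise. Using $\binom{k}{l}\le \exp(k\,h_2(l/k))$ and $\binom{n-k}{l}\le \exp((n-k)\,h_2(l/(n-k)))$, the log of this count is at most $n\,N(l)$ with $N(l)$ exactly as defined in the statement. A union bound over all $l\in[1:k]$ then gives failure probability at most $\sum_{l=1}^{k}\exp\bigl(nN(l) - \tfrac{m}{2}\log(1+\tfrac{l}{2\sigma^2})\bigr)$ (plus the negligible contribution of the bad events for $\|\vecA(\bx-\bx')\|^2$), and this tends to zero once $m \ge \max_{l\in[1:k]} \frac{nN(l)}{\frac12\log(l/(2\sigma^2)+1)}$ with a little slack absorbed into ``high probability'' — matching \eqref{eq:upper_bd_mle}. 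The $\log(l/(2\sigma^2)+1)$ versus $\log(1+\tfrac{l}{2\sigma^2})$ discrepancy is cosmetic (same quantity), so no issue there.

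The step I expect to be the main obstacle is getting the pairwise error exponent sharp while the randomness of $\vecA$ is in play: the naive approach conditions on $\vecA$ and treats it as a fixed code, but then the exponent depends on $\|\vecA(\bx-\bx')\|^2$, which is itself random and must be controlled uniformly over the exponentially many pairs $(\bx,\bx')$ without losing the constant in front of $l$. The clean fix is to either (i) integrate out $\vecA$ first — noting that for fixed $\bz$ the error probability is a deterministic function one can bound by a Gaussian small-ball/large-deviation estimate — or (ii) show $\|\vecA(\bx-\bx')\|^2 \in [(1-\epsilon)2lm, (1+\epsilon)2lm]$ simultaneously for all relevant $\bx'$ via a chi-squared tail bound and a union bound, which costs only a polynomial-in-$n$ factor (absorbed in the exponent) since there are at most $\binom{n}{k}^2$ pairs and $2lm = \Omega(m)$ makes the deviation bound strong enough. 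Either route recovers the stated threshold; I would follow the approach of \cite{pmlr-v99-reeves19a}, adapting their approximate-recovery MLE analysis to the exact-recovery setting where all $l\in[1:k]$ (not just $l$ near $k$) must be handled in the maximum.
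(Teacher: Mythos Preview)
Your proposal is correct and follows essentially the same approach as the paper: union bound over $l\in[1:k]$, count $\binom{k}{l}\binom{n-k}{l}\le 2^{nN(l)}$, and pairwise error probability bounded by $2^{-\frac{m}{2}\log(1+l/(2\sigma^2))}$. For the step you flagged as the main obstacle, the paper takes your route (i) rather than (ii): it conditions on $b_r:=A_{r,\cU\setminus\cS}-A_{r,\cS\setminus\cU}$ (which are i.i.d.\ $\cN(0,2l)$), integrates out $\bz$ to get $Q\bigl(\sqrt{\sum_r b_r^2}/(2\sigma)\bigr)$, applies $Q(x)\le \tfrac12 e^{-x^2/2}$, and then integrates over $\bb$ via a Gaussian integral that evaluates \emph{exactly} to $(1+l/(2\sigma^2))^{-m/2}$ --- so there is no $\epsilon$-slack or uniform chi-squared concentration needed, and the constant in the exponent comes out sharp.
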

We prove the theorem in Appendix~\ref{proof:MLE}. The main proof idea involves analysing the probability that the output of the MLE is $2l$ Hamming distance away from the unknown signal $\bx$ for different values of $l\in [1:k]$ (assuming $k\leq n/2$). This depends on the number of such vectors (approximately $2^{nN(l)}$) and the probability that the MLE outputs a vector which is $2l$ Hamming distance away from $\bx$. 

Note that when $l = k\inp{1-\frac{k}{n}}$, $nN(l) = nh_2(k/n)\approx k\log{\frac{n}{k}}$ and $\log\inp{\frac{k\inp{1-k/n}}{2\sigma^2}+1}\leq \log\inp{\frac{k}{2\sigma^2}+1}$.
Thus, $m$ is at least $\frac{2k\log{n/k}}{\log\inp{\frac{k}{2\sigma^2}+1}}$ (see the bound for Corollary~\ref{thm: spl_lower_bd_1}). It is not immediately clear if this value of $l= k\inp{1-\frac{k}{n}}$ is the optimizer. However, for large $n$, this appears to be the case numerically as shown in Plot~\ref{plot:1}.

\begin{figure}[t]
\includegraphics[width=7cm]{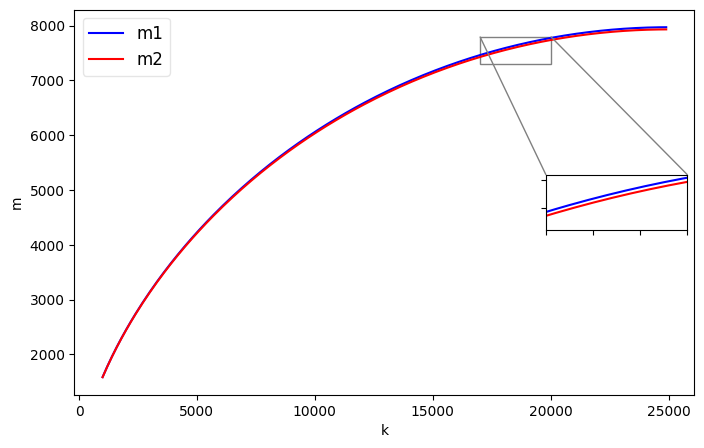}
\centering
\caption{The figure shows the plot of the MLE upper bound \eqref{eq:upper_bd_mle} (given by m1) for different values of $k$. This is displayed in blue color. A plot of $\frac{2nN(l)}{\log\inp{\frac{ l}{2\sigma^2}+1}}$ is also presented for $l = k\inp{1-\frac{k}{n}}$ in orange color, given by m2. A part of the plot is zoomed in to emphasize the closeness between the lines. In these plots,  $\sigma^2$ is set to 1,  $n$ is 50000 and $k$ ranges from 1000 to 25000 $(n/2)$. }\label{plot:1}
\end{figure}

Inspired by the MLE analysis, we derive a lower bound with the same structure as \eqref{eq:upper_bd_mle}. We generate the unknown signal $\bx$ using the following distribution: A vector $\tilde{\bx}$ is chosen uniformly at random from the set of all $k$-sparse vectors. Given $\tilde{\bx}$, the unknown input signal $\bx$ is chosen uniformly from the set of all $k$-sparse vector which are at a Hamming distance $2l$ from $\bx$. 
The lower bound is then obtained by computing upper and lower bounds on $I(\vecA, \by;\bx|\tilde{\bx})$.
We show this lower bound only for random matrices where each entry is chosen iid $\cN(0,1)$.
\begin{theorem}[\spl\ lower bound]\label{thm:lower_bd_spl}
If each entry of $\vecA$ is chosen iid $\cN(0,1)$, then for a uniformly chosen $k$-sparse vector $\bx$, an algorithm $\phi$ satisfies 
\begin{align}
    \bbP\inp{\phi(\vecA, {\by}) \neq \bx}\leq \delta\label{eq:spl_lower_bd_l}
\end{align}  only if the number of measurements $$m\geq \max_l\frac{nN(l) - 2\log{n}- h_2(\delta) - \delta k\log{n}}{\frac{1}{2}\log\inp{1+\frac{l}{\sigma^2}\inp{2-\frac{l}{k}}}} .$$
\end{theorem}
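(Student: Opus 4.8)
The plan is to run a conditional Fano argument against the two–stage prior in the statement (which, by symmetry, has the uniform marginal on $k$-sparse vectors, so the hypothesis \eqref{eq:spl_lower_bd_l} applies), and then to bound the resulting conditional mutual information by an \emph{optimized} single-measurement computation. Write $\cS_{\tilde\bx}$ for the support of $\tilde\bx$, $Z = [1:n]\setminus\cS_{\tilde\bx}$, and set $\be := \bx-\tilde\bx$; conditioned on $\tilde\bx$, the vector $\be$ has $l$ entries equal to $+1$ on a uniformly random $l$-subset of $Z$ and $l$ entries equal to $-1$ on an independent uniformly random $l$-subset of $\cS_{\tilde\bx}$, so $\|\be\|_2^2 = 2l$ and the number of admissible $\bx$ is $M := \binom{k}{l}\binom{n-k}{l}$, whence $H(\bx\mid\tilde\bx)=\log M$. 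Since $\hat\bx=\phi(\vecA,\by)$ is a function of $(\vecA,\by)$ and $\bx$ is independent of $\vecA$ given $\tilde\bx$, conditional Fano gives
\begin{align*}
\log M - I(\bx;\by\mid\vecA,\tilde\bx) \;=\; H(\bx\mid\vecA,\by,\tilde\bx)\;\le\; h_2(\delta) + \delta\log M .
\end{align*}
For the left side I will use $\binom{N}{m}\ge\tfrac{1}{N+1}2^{Nh_2(m/N)}$ on each factor to get $\log M\ge nN(l)-2\log n$, and for the error term I will use that $k$-sets at distinct Hamming distances from a fixed $k$-set partition all $k$-sets, so $M\le\binom{n}{k}\le n^k$ and $\delta\log M\le\delta k\log n$.

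It then remains to prove $I(\bx;\by\mid\vecA,\tilde\bx)\le\tfrac m2\log\!\big(1+\tfrac{l}{\sigma^2}(2-\tfrac lk)\big)$. Since the $z_i$ are i.i.d.\ $\cN(0,\sigma^2)$ independent of $(\bx,\vecA)$ and $y_i$ depends on $\vecA$ only through $\vecA_i$, subadditivity of differential entropy yields
\begin{align*}
I(\bx;\by\mid\vecA,\tilde\bx) = h(\by\mid\vecA,\tilde\bx) - \tfrac m2\log(2\pi e\sigma^2) \le \sum_{i=1}^m\Big(h(y_i\mid\vecA_i,\tilde\bx) - \tfrac12\log(2\pi e\sigma^2)\Big).
\end{align*}
Fixing $i$: conditioned on $\vecA_i$ and $\tilde\bx$, $y_i=\vecA_i^T\tilde\bx+\vecA_i^T\be+z_i$ is a known shift of $\vecA_i^T\be+z_i$, whose conditional variance is $v(\vecA_i)+\sigma^2$ with $v(\vecA_i):=\operatorname{Var}(\vecA_i^T\be\mid\vecA_i,\tilde\bx)$; bounding entropy by that of the Gaussian of the same variance and averaging over $\vecA_i$ via Jensen (concavity of $\log$) gives $h(y_i\mid\vecA_i,\tilde\bx)\le\tfrac12\log\!\big(2\pi e(\bbE[v(\vecA_i)]+\sigma^2)\big)$.

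The crucial computation is $\bbE[v(\vecA_i)]$. By the law of total variance,
\begin{align*}
2l = \operatorname{Var}(\vecA_i^T\be\mid\tilde\bx) = \bbE\big[v(\vecA_i)\big] + \operatorname{Var}\big(\bbE[\vecA_i^T\be\mid\vecA_i,\tilde\bx]\,\big|\,\tilde\bx\big),
\end{align*}
where the first equality holds because the entries of $\vecA_i$ are i.i.d.\ $\cN(0,1)$ independent of $\be$ and $\|\be\|_2^2=2l$, and the conditional mean equals $\tfrac{l}{n-k}\sum_{j\in Z}A_{i,j}-\tfrac lk\sum_{j\in\cS_{\tilde\bx}}A_{i,j}$, which (the two sums being over disjoint index sets) has variance $\tfrac{l^2}{n-k}+\tfrac{l^2}{k}$. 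Hence $\bbE[v(\vecA_i)] = 2l-\tfrac{l^2}{k}-\tfrac{l^2}{n-k}\le l(2-\tfrac lk)$, so each summand is at most $\tfrac12\log\!\big(1+\tfrac{l}{\sigma^2}(2-\tfrac lk)\big)$, as claimed. Combining the two halves gives $nN(l)-2\log n-h_2(\delta)-\delta k\log n\le\tfrac m2\log\!\big(1+\tfrac{l}{\sigma^2}(2-\tfrac lk)\big)$ for every $l\in[1:k]$; rearranging and taking the maximum over $l$ yields the theorem.

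The main obstacle is the per-measurement estimate: the naive bound $h(y_i\mid\vecA_i,\tilde\bx)\le\tfrac12\log(2\pi e(2l+\sigma^2))$ only produces the factor $2l$ and misses the $-l^2/k$ saving, so one genuinely has to exploit that revealing the realization of $\vecA_i$ while keeping the \emph{location} of the perturbation hidden strips off the $\operatorname{Var}\big(\bbE[\vecA_i^T\be\mid\vecA_i]\big)$ share of the effective signal power, and to carry out this total-variance bookkeeping uniformly over $\tilde\bx$. Everything else is standard information-theoretic manipulation together with the entropy–binomial estimates already used for Corollary~\ref{thm: spl_lower_bd_1}.
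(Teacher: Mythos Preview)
Your proof is correct and follows the same architecture as the paper: the two-stage prior, conditional Fano, a per-measurement bound via the Gaussian max-entropy inequality, and Jensen to pull the expectation inside the logarithm. The one difference is in the variance step: the paper conditions only on $\vecA_{i,j\in\cS_{\tilde\bx}}$ (the entries of $\vecA_i$ on the support of $\tilde\bx$) and then expands $\bbE[y_i^2\mid\cdot]-(\bbE[y_i\mid\cdot])^2$ directly using the combinatorial identities $\bbP(x_j=1\mid\tilde\bx)=\tfrac{k-l}{k}$ and $\bbP(x_j=x_{l'}=1\mid\tilde\bx)=\tfrac{(k-l)(k-l-1)}{k(k-1)}$, whereas you condition on the full row $\vecA_i$ and invoke the law of total variance on $\vecA_i^T\be$. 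Your route is shorter and actually gives the marginally sharper intermediate value $\bbE[v(\vecA_i)]=2l-\tfrac{l^2}{k}-\tfrac{l^2}{n-k}$ before you round up to $l(2-\tfrac lk)$; the paper lands on $l(2-\tfrac lk)$ directly because the extra $\tfrac{l^2}{n-k}$ term never appears when the off-support entries of $\vecA_i$ are left unrevealed.
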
 The proof of Theorem~\ref{thm:lower_bd_spl} is given in Appendix~\ref{proof:MLE}.

If we choose $l = k\inp{1-\frac{k}{n}}$ in Theorem~\ref{thm:lower_bd_spl}, we recover corollary~\ref{thm: spl_lower_bd_1} for the special case of Gaussian design.



\section{Proofs}\label{sec:proofs}
\begin{proof}[Proof of Theorem~\ref{thm:alg_general}]
Consider any input $\bx$ and a sensing matrix $\vecA$ where each entry is chosen iid $\cN(0,1)$. Suppose ${\bx}$ is supported on $\cS\subseteq[1:n]$ where $|\cS|=k$. Let $\by = (y_1, \ldots, y_m)$. Consider the event
\begin{align*}
    \cF = \inb{\sum_{i=1}^m{ {y_i A_{i,j}}}> \sum_{i=1}^m{ {y_i A_{i,j'}}}\text{ for all } j \in \cS,  j'\in\cS^c}
\end{align*}
It is clear that under $\cF$, the algorithm is correct.
We will compute the probability of $\cF^c$.
\begin{align}
\bbP\inp{\cF^c} &= \bbP\inp{\bigcup_{j\in\cS}\bigcup_{j'\in \cS^c}\inb{\sum_{i=1}^m{ {y_i A_{i,j'}}}\geq \sum_{i=1}^m{ {y_i A_{i,j}}}}}\nonumber\\
&\leq \sum_{j\in\cS}\sum_{j'\in \cS^c}\bbP\inp{\sum_{i=1}^m{ {y_i A_{i,j'}}}\geq \sum_{i=1}^m{ {y_i A_{i,j}}}}\nonumber\\
&=\sum_{j\in\cS}\sum_{j'\in \cS^c}\bbP\inp{\sum_{i=1}^m{ \inp{y_i (A_{i,j'}-A_{i,j})}}\geq 0}\label{eq:log_prob_f^c1}
\end{align}

For any $i\in[1:m]$, $j\in\cS$ and $j'\in \cS^c$, we first compute $\bbE\insq{y_i (A_{i,j}-A_{i,j'})}$.
\begin{align}
\bbE\insq{y_i (A_{i,j}-A_{i,j'})} &= \bbE\insq{y_i A_{i,j}}-\bbE\insq{y_iA_{i,j'}}\nonumber\\
& \stackrel{(a)}{=} \bbE\insq{y_i A_{i,j}} \label{eq:log_expt11}\\
&\stackrel{(b)}{=}\frac{\bbE\insq{y_iA_{i,\cS}}}{k}\nonumber\\
& = \frac{\bbE\insq{y_i\vecA_{i}^T\bx}}{k}\nonumber\\
&= \frac{\bbE\insq{A_{i}^T\bx\,\bbE\insq{y_1|\vecA_i^T\bx}}}{k}\nonumber\\
& \stackrel{(c)}{=} \frac{\bbE\insq{A_{i}^T\bx g\inp{\vecA_i^T\bx}}}{k}\nonumber\\
& \stackrel{(d)}{=} {\bbE\insq{ g'\inp{\vecA_i^T\bx}}}:= E\label{eq:log_expt1_avg11}
\end{align} where $(a)$ follows from the fact that $y_i$ and  $A_{i,j'}$ are zero mean, independent random variables and $(b)$ follows by defining $A_{i,\cS} = \sum_{j\in \cS}A_{i,j}$ and noticing that the random variables $y_i A_{i,j}$ are identically distributed for all $j\in \cS$, $(c)$ follows from \eqref{eq:glm} and $(d)$ follows from Stein's lemma. 
\begin{align*}
\bbP&\inp{\sum_{i=1}^m{ \inp{y_i (A_{i,j'}-A_{i,j})}}\geq 0}\\
&=\bbP\inp{\sum_{i=1}^m{ \inp{y_i (A_{i,j}-A_{i,j'})}}\leq 0}\\
& = \bbP\inp{\sum_{i=1}^m{ \inp{y_i (A_{i,j}-A_{i,j'})}} - mE\leq -mE}\\
& \leq \bbP\inp{\left|\sum_{i=1}^m{ \inp{y_i (A_{i,j}-A_{i,j'})}} - mE\right|\geq mE}\\
\end{align*}
\vspace{-0.001cm}
To compute this, note that for all $i\in [1:m]$, $y_i$  is a subgaussian random variable and ${y_i}\inp{A_{i, j}-A_{i, j'}}$ being product of two subgaussian random variables is a subexponential random variable (see [Lemma 2.7.7]\cite{vershynin}). Note that $\bbE\insq{\sum_{i=1}^m{ \inp{y_i (A_{i,j}-A_{i,j'})}}} = mE$ where $E$ was defined in \eqref{eq:log_expt1_avg11}.
Also, 
\begin{align*}
&\hspace{-0.3cm}\normi{{y_i}\inp{A_{i, j}-A_{i, j'}} - 1}_{\psi_1}\\
&\stackrel{(a)}{\leq} C\normi{{y_i}\inp{A_{i, j}-A_{i, j'}}}_{\psi_1}\\
&\stackrel{(b)}{\leq} C\normi{{y_i}}_{\psi_2}\normi{\inp{A_{i, j}-A_{i, j'}}}_{\psi_2}\\
&\stackrel{(c)}{\leq} C\normi{{y_i}}_{\psi_2}2C'\\
& = C_1\normi{{y_i}}_{\psi_2} \qquad\qquad\text{ for some constant $C_1$}.
\end{align*} Here, $(a)$ follows from [Exercise 2.7.10]\cite{vershynin}, $(b)$ from [Lemma 2.7.7]\cite{vershynin} and $(c)$ from [Example 2.5.8]\cite{vershynin}.
With this
\begin{align*}
\bbP&\inp{\left|\sum_{i=1}^m{ \inp{y_i (A_{i,j}-A_{i,j'})}} - mE\right|\geq mE}\\
&\stackrel{(a)}{\leq} 2\exp\inp{-c\min\inp{\frac{m^2E^2}{mC_1^2\normi{{y_i}}_{\psi_2}^2}, \frac{mE}{C_1\normi{{y_i}}_{\psi_2}} }}\\
&\stackrel{(b)}{\leq} 2\exp\inp{-cm\min\inp{\frac{mL^2}{C_1^2}, \frac{mL}{C_1 }}}
\end{align*} where $(a)$ follows from [Theorem 2.8.1]\cite{vershynin} and $(b)$ follows from the assumption in the lemma that $\frac{E}{\normi{{y_i}}_{\psi_2}} = \frac{\bbE\insq{g'(\vecA_i^T\bx)}}{\normi{{y_i}}_{\psi_2}}\geq L$.
Thus, from \eqref{eq:log_prob_f^c1}, 
\begin{align*}
\bbP\inp{\cF^c}&\leq k(n-k)2\exp\inp{-C_2 m\min\inp{L^2, L}}\\
&\rightarrow 0 \text{ if }m\geq C_2\inp{\log{k}+\log\inp{n-k}}\frac{1}{\min\inp{L^2, L}}
\end{align*} for some constant $C_2$.
\end{proof}

\begin{proof}[Proof of Theorem~\ref{thm: lower_bdglm}]
Suppose $\bx$ is distributed uniformly on the set of all $k$-sparse binary vectors. Then,
\begin{align}
I(\vecA, \by; \bx)  &= H(\bx) - H(\bx|\vecA, \by)\nonumber\\
&\stackrel{(a)}{\geq} \log{n \choose k } - h_2(\delta) - \delta\log\inp{{n \choose k } + 1}\nonumber\\
&\geq k\log{n/k}- h_2(\delta)- \delta k\log\inp{n}\label{eq:log_lower_bd_bcy_1}
\end{align} where $(a)$ follows from Fano's inequality [Theorem~2.10.1]\cite{thomas2006elements}.
We also note that
\begin{align*}
 I(\vecA, \by; \bx) &= I(\vecA ; \bx)    +  I(\by; \bx|\vecA)\\
 &\stackrel{(a)}{ = }0 + I(\by; \bx|\vecA).
\end{align*}where $(a)$ holds because $\vecA$ and $\bx$ are independent. Let $y_{j\in[1:i-1]}$ denote $\inp{y_1, \ldots, y_{i-1}}$. 
\begin{align}
I&(\by; \bx|\vecA) = \sum_{i = 1}^{m}I(y_i; \bx|\vecA, y_{j\in[1:i-1]})\nonumber\\
& = \sum_{i = 1}^{m}\Big(H(y_i|\vecA, y_{j\in[1:i-1]})\nonumber\\
&\qquad- H(y_i| \bx,\vecA, y_{j\in[1:i-1]})\Big)\nonumber\\
& \stackrel{(a)}{\leq}\sum_{i = 1}^{m}\inp{H(y_i|\vecA)- H(y_i| \bx,\vecA)}\nonumber\\
& = \sum_{i = 1}^{m}I(y_i;\bx|\vecA)\nonumber\\
&\stackrel{(b)}{\leq} mI \label{eq:log_lower_bd_bg}
\end{align}where $(a)$ follows from $H(y_i|\vecA, y_{j\in[1:i-1]})\leq H(y_i|\vecA)$ and $H(y_i| \bx,\vecA, y_{j\in[1:i-1]}) = H(y_i| \bx,\vecA)$ as $y_i$ is conditionally independent of $y_{j\in[1:i-1]}$ conditioned on $\bx$ and $\vecA$ and $(b)$ follows from the assumption in the Theorem. Thus, from \eqref{eq:log_lower_bd_bcy_1} and \eqref{eq:log_lower_bd_bg},
\begin{align*}
mI &\geq  k\log\inp{n/k}\inp{1-\frac{h_2(\delta)+ \delta k\log\inp{n}}{k\log{n/k}}}
\end{align*}
This gives us the desired bound.

We can further simplify $I(y_i;\bx|\vecA)$ when $y_i\in \inb{-1,1}$, 
\begin{align*}
I(y_i;\bx|\vecA) &= H(y_i|\vecA)- H(y_i|\bx, \vecA)\\
&\stackrel{(a)}{\leq} 1- H(y_i|\bx, \vecA_i).
\end{align*}where $(a)$ holds because $H(y_i|\vecA)\leq H(y_i) =1$ and $y_i$ is conditionally independent of $(\vecA_1\ldots, \vecA_{i-1}, \vecA_{i+1}, \ldots, \vecA_{m})$ conditioned on $\vecA_i$ and $\bx$. Here $\vecA_i$, $i\in [1:m]$ denotes the $i^{\text{th}}$ row of the sensing matrix $\vecA$.

Suppose $\bx$ is fixed and $\bbP\inp{y_i = 1} = \frac{1}{2} + t$ for some $t\in [-1,1]$.  Then $\bbE\insq{y_i|\vecA_i} = 2t = g(\vecA_i^T\bx)$.
\begin{align*}
H(y_i|\vecA_i, \bx) &\stackrel{(a)}{=} \bbE\insq{h_2\inp{\frac{1}{2} + t}}\\
& \stackrel{(b)}{\geq} \bbE_{\bx}\insq{\bbE_{\vecA}\insq{4\inp{\frac{1}{2} + t}\inp{\frac{1}{2} - t}\Big|\bx}}\\
& = 1- \bbE_{\bx}\insq{\bbE\insq{\inp{2t}^2\Big|\bx}}\\
& = 1- \bbE_{\bx}\insq{\bbE\insq{\inp{g(\vecA_i^T\bx)}^2\Big|\bx}}\\
& = 1- \bbE_{\vecA, \bx}\insq{\inp{g(\vecA_i^T\bx)}^2}
\end{align*}
where in $(a)$, the expectation is over $\vecA$ and $\bx$. The inequality $(b)$ follows from [Theorem 1.2]\cite{topsoe2001bounds}. With this $I(y_i;\bx|\vecA)\leq \bbE\insq{\inp{g(\vecA_i^T\bx)}^2}$.
\end{proof}

\section{Conclusion and open problems}\label{sec:conclusion}
We analyze a simple algorithm (the ``average algorithm'' from \cite{vershyninPlan}  followed by `top-k' selection) for recovering sparse binary vectors from generalized linear measurements; along with an information theoretic lower bound. This gives optimal sample complexity characterization for \bcs\ and \logreg. On the other hand, the required number of measurements for the noisy linear case (\spl), which is $O((k+\sigma^2)\log{n})$, is as good as the sample complexity of any other known efficient algorithm for this problem, up to constants.  An interesting open problem is to find a design matrix and an efficient algorithm which requires less than $(k+\sigma^2)\log{n}$ samples for \spl. When the noise variance is zero, we show such an algorithm in  Remark~\ref{remark:noNoise}.

We also present almost matching information theoretic upper and lower bounds for \spl\ given by  \eqref{eq:upper_bd_mle} and \eqref{eq:spl_lower_bd_l} respectively. The bounds are in the form of an optimization problem. While we present numerical evidence which suggests that \eqref{eq:upper_bd_mle} is optimized by $l = k\inp{1-\frac{k}{n}}$, a formal proof is still missing. The bounds in   \eqref{eq:upper_bd_mle} and \eqref{eq:spl_lower_bd_l} also differ slightly by constants in the denominator, which seems to be a persistent gap in this problem.

\paragraph{Acknowledgment}
This work is supported in part by NSF awards 2217058 and 2112665. The authors would like to thank Krishna Narayanan who introduced them to the binary linear regression problem at the Simons Institute program on Error-correcting codes.

\bibliography{example_paper}
\bibliographystyle{alpha}

\newpage
\appendix
\onecolumn
\section{Proofs}\label{appendix:proofs}
\subsection{Missing proofs from Section~\ref{sec:alg}}\label{proof:sec:alg}
\begin{proof}[Proof of Corollary~\ref{thm:alg_bcs}]
We need to compute a lower bound $L$ on  $\frac{\bbE\insq{g'(\vecA_i^T\bx)}}{\normi{{y_i}}_{\psi_2}}$.  Instead of computing $\bbE\insq{g'(\vecA_i^T\bx)}$, we will compute $\bbE\insq{y_iA_{i,j}}$ for  any $j$ in the support of $\bx$. From \eqref{eq:log_expt11} and \eqref{eq:log_expt1_avg11}, we note that $\bbE\insq{y_iA_{i,j}} = \bbE\insq{g'(\vecA_i^T\bx)}$.
Also note that
$\bbE\insq{y_i A_{i,j}}= \bbE\insq{ A_{i,j}\bbE\insq{y_i| A_{i,j}}}$.

For any $\cU\subseteq[1:n]$, we denote $\sum_{l \in \cU}A_{i,l}$ by $A_{i,\cU}$.  For any $A_{i,j} = a$, 
\begin{align*}
\bbP&\inp{y_i = 1|A_{i,j} = a} \\
&= \bbP\inp{A_{i,\cS\setminus\inb{j}}+z_i\geq -a} \\
& = \bbP\inp{\frac{A_{i,\cS\setminus\inb{j}}+z_i}{\sqrt{k-1+\sigma^2}}\geq -\frac{a}{\sqrt{k-1+\sigma^2}}}\\
& = 1-\Phi\inp{-\frac{a}{\sqrt{k-1+\sigma^2}}}
\end{align*} where $\Phi(x) = \frac{1}{2\pi}\int_{-\infty}^{x}e^{-\frac{t^2}{2}}dt$ is the cumulative distribution function of the standard Gaussian distribution. Thus, 
$\bbP\inp{y_i = -1|A_{i,j} = a}  = \Phi\inp{-\frac{a}{\sqrt{k-1+\sigma^2}}}$ and 
$$\bbE\insq{y_i| A_{i,j}=a} = 1-2\Phi\inp{-\frac{a}{\sqrt{k-1+\sigma^2}}}.$$ We are now ready to compute $\bbE\insq{ A_{i,j}\bbE\insq{y_i| A_{i,j}}}$.
\begin{align}
\bbE&\insq{ A_{i,j}\bbE\insq{y_i| A_{i,j}}} \nonumber\\
&= \bbE\insq{ A_{i,j}\inp{1-2\Phi\inp{-\frac{A_{i,j}}{\sqrt{k-1+\sigma^2}}}}}\nonumber\\
&= \bbE\insq{ A_{i,j}}-2\bbE\insq{A_{i,j}\Phi\inp{-\frac{A_{i,j}}{\sqrt{k-1+\sigma^2}}}}\nonumber\\
&= 0-2\bbE\insq{A_{i,j}\Phi\inp{-\frac{A_{i,j}}{\sqrt{k-1+\sigma^2}}}}\label{eq:expt2}
\end{align}
\begin{align}
\bbE&\insq{A_{i,j}\Phi\inp{-\frac{A_{i,j}}{\sqrt{k-1+\sigma^2}}}}\nonumber\\
& = \int_{-\infty}^{\infty}a\frac{1}{\sqrt{2\pi}}e^{-\frac{a^2}{2}}\inp{\frac{1}{\sqrt{2\pi}}\int_{-\infty}^{-\frac{a}{\sqrt{k-1+\sigma^2}}}e^{-\frac{t^2}{2}}dt}da\nonumber\\
& = \frac{1}{2\pi}\int_{-\infty}^{\infty}\int_{-\infty}^{-\frac{a}{\sqrt{k-1+\sigma^2}}}ae^{-\frac{a^2}{2}}e^{-\frac{t^2}{2}}dt\,da\nonumber\\
& \stackrel{(a)}{=} \frac{1}{2\pi}\int_{-\infty}^{\infty}\int_{-\infty}^{-{t}{\sqrt{k-1+\sigma^2}}}ae^{-\frac{a^2}{2}}e^{-\frac{t^2}{2}}da\,dt\nonumber\\
& =\frac{1}{2\pi}\int_{-\infty}^{\infty}\inp{\int_{-\infty}^{-{t}{\sqrt{k-1+\sigma^2}}}ae^{-\frac{a^2}{2}}da}e^{-\frac{t^2}{2}}dt\nonumber\\
& = \frac{1}{2\pi}\int_{-\infty}^{\infty}\inp{-e^{-\frac{t^2(k-1+\sigma^2)}{2}}}e^{-\frac{t^2}{2}}dt\nonumber\\
& = -\frac{1}{\sqrt{2\pi\inp{k+\sigma^2}}}\int_{-\infty}^{\infty}\frac{\sqrt{k+\sigma^2}}{\sqrt{2\pi}}e^{-\frac{t^2(k+\sigma^2)}{2}}dt\nonumber\\
& = -\frac{1}{\sqrt{2\pi\inp{k+\sigma^2}}}\label{eq:expt3}
\end{align} where $(a)$ follows for change of variable formula for integration. From \eqref{eq:expt2} and \eqref{eq:expt3}, we have 
\begin{align}
\bbE\insq{y_i A_{i,j}} = \sqrt{\frac{2}{\pi}}\times\frac{1}{\sqrt{\inp{k+\sigma^2}}}.\label{eq:expt4}
\end{align} From [Example 2.5.8]\cite{vershynin}, we also note that $\normi{{y_i}}_{\psi_2} = 1$. Thus, $L = \sqrt{\frac{2}{\pi}}\times\frac{1}{\sqrt{\inp{k+\sigma^2}}}$ and $\min\inb{L, L^2} = L^2$, which when substituted in \eqref{eq: alg_bound}  gives the desired bound.
\end{proof}

\begin{proof}[Proof of Corollary~\ref{thm:alg_spl}]
We first note that  $\frac{\bbE\insq{g'(\vecA_i^T\bx)}}{\normi{{y_i}}_{\psi_2}} = \frac{\bbE\insq{y_iA_{i,j}}}{\normi{{y_i}}_{\psi_2}}$ for  any $j$ in the support of $\bx$. This follows from \eqref{eq:log_expt11} and \eqref{eq:log_expt1_avg11}. We first compute $\bbE\insq{y_iA_{i,j}}$, which is the same as $\bbE\insq{\inp{\vecA_i^T\bx+z_i}A_{i,j}}$ for \spl. 
Note that $\bbE\insq{\inp{\vecA_i^T\bx+z_i}\inp{A_{i, j}}} = \bbE\insq{A_{i, j}^2} = 1$.
Also, from [Example 2.5.8]\cite{vershynin} 
\begin{align*}
\normi{\inp{\vecA_i^T\bx+z_i}}_{\psi_2}\leq C\sqrt{k+\sigma^2}\\
\end{align*} for some constants $C$. With this,
\begin{align*}
\frac{\bbE\insq{g'(\vecA_i^T\bx)}}{\normi{{y_i}}_{\psi_2}}\geq \frac{1}{C\sqrt{k+\sigma^2}}: =L.
\end{align*}Note that $\min\inb{L, L^2} = L^2$, which when substituted in \eqref{eq: alg_bound} gives the desired bound.
\end{proof}

\begin{proof}[Proof of Corollary~\ref{thm:alg_logreg}]
We will first compute $g(\vecA_i^T\bx) = \bbE\insq{y_i|\vecA_i^T\bx}$ for \logreg.
\begin{align*}
g(\vecA_i^T\bx)&=\bbE\insq{y_i|\vecA_i^T\bx}\\
&= \frac{1}{1+e^{-\beta \vecA_i^T\bx}}-\frac{e^{-\beta \vecA_i^T\bx}}{1+e^{-\beta \vecA_i^T\bx}}\\
&=\frac{1-e^{-\beta \vecA_i^T\bx}}{1+e^{-\beta \vecA_i^T\bx}}\\
& \stackrel{(a)}{=} {\tanh\inp{\frac{\beta \vecA_i^T\bx}{2}}}
\end{align*}where $(a)$ uses the definition of $\tanh$. Then
\begin{align*}
\bbE\insq{g'(\vecA_i^T\bx)} &= \frac{\beta}{2}\bbE\insq{\frac{1}{{\text{cosh}}^2\inp{\frac{\beta \vecA_i^T\bx}{2}}}}\\
& \stackrel{(c)}{\geq}\frac{\beta}{2}\bbE\insq{e^{-\frac{\inp{\beta\vecA_i^T\bx}^2}{4}}}
\end{align*} where $(c)$ follows from the inequality $\text{cosh}(t)\leq e^{t^2/2}$ (see [Exercise 2.2.3]\cite{vershynin}). 


Now, we need to compute $\bbE\insq{e^{-\frac{\inp{\beta\vecA_i^T\bx}^2}{4}}}$ where $\vecA_i^T\bx\sim N(0,k)$. Let $\sigma_1 := \frac{1}{\frac{\beta^2}{2}+\frac{1}{k}}$. Then

\begin{align}
\bbE\insq{e^{-\frac{\inp{\beta\vecA_i^T\bx}^2}{4}}} &= \int_{-\infty}^{\infty}\frac{1}{\sqrt{2\pi k}}e^{-\beta^2a^2/4}e^{-a^2/2k} da\nonumber\\
& = \sqrt{\frac{\sigma_1}{k}}\int_{-\infty}^{\infty}\frac{1}{\sqrt{2\pi \sigma_1}}e^{-x^2/2\sigma_1} da\nonumber\\
& = \sqrt{\frac{\sigma_1}{k}}\nonumber\\
& = \sqrt{\frac{2}{2+\beta^2 k}}\label{eq:expectation_log}
\end{align}

Thus,
\begin{align*}
\bbE\insq{g'(\vecA_i^T\bx)}&\geq \frac{\beta}{2}\sqrt{\frac{2}{2+\beta^2 k}}\\
& = \frac{1}{2}\sqrt{\frac{2}{2/\beta^2+ k}}.
\end{align*}
From [Example 2.5.8]\cite{vershynin}, we also note that $\normi{{y_i}}_{\psi_2} = 1$. Thus, $L = \frac{1}{2}\sqrt{\frac{2}{2/\beta^2+ k}}$ and $\min\inp{L, L^2} = L^2$, which gives the desired bound.

\end{proof}

\subsection{Missing proofs from Section~\ref{sec:sample_compexity}}\label{proof:sec:lower_bd}
\begin{proof}[Proof of Corollary~\ref{thm: lower_bd_bcs}]
Consider a sensing matrix $\vecA$ which satisfies the power constraint \eqref{eq:power_constraint}. 

Here $\vecA_i$, $i\in [1:m]$ denotes the $i^{\text{th}}$ row of the sensing matrix $\vecA$.
For any realization $b\in \bbR$ of $\vecA_i^T\bx$, 
\begin{align*}
\bbP(y_i = 1|\vecA_i^T\bx = b) &= \bbP(z_i\geq -b)= \bbP\inp{\frac{z_i}{\sigma}\geq \frac{-b}{\sigma}}\\
& = \frac{1- \sign{b}}{2} + \sign{b}Q\inp{\frac{|b|}{\sigma}}.
\end{align*} 

For $a>0$, let $R(a):=\frac{1}{\sqrt{2\pi}}\int_{0}^{a}e^{-u^2/2}du$. Then $Q(a) = \frac{1}{2}-R(a)$. Suppose $\bx$ is fixed. Then,
\begin{align*}
g(\vecA_i^T\bx) & =\bbE\insq{y_i|\vecA}  = \bbE\insq{y_i|\vecA_i^T\bx}\\
& = \frac{1- \sign{\vecA_i^T\bx}}{2} + \sign{\vecA_i^T\bx}Q\inp{\frac{|\vecA_i^T\bx|}{\sigma}} - \inp{1-\inp{\frac{1- \sign{\vecA_i^T\bx}}{2} + \sign{\vecA_i^T\bx}Q\inp{\frac{|\vecA_i^T\bx|}{\sigma}}}}\\
& = \sign{\vecA_i^T\bx}\inp{1-2Q\inp{\frac{|\vecA_i^T\bx|}{\sigma}}}\\
& = \sign{\vecA_i^T\bx}\inp{2R\inp{\frac{|\vecA_i^T\bx|}{\sigma}}}
\end{align*}
For any $a>0$,
\begin{align*}
R\inp{a} &= \frac{1}{\sqrt{2\pi}}\int_{0}^{a}e^{-u^2/2}du\\
& \leq \frac{1}{\sqrt{2\pi}}\int_{0}^{a}1du = \frac{a}{\sqrt{2\pi}}.
\end{align*}
Thus, 
\begin{align*}
\bbE\insq{\inp{g\inp{\vecA_i^T\bx}^2}} &= \bbE\insq{\inp{2R\inp{\frac{|\vecA_i^T\bx|}{\sigma}}}^2}\\
&\leq \bbE\insq{4\inp{\frac{\vecA_i^T\bx}{\sqrt{2\pi}\sigma}}^2}\\
&\stackrel{(a)}{\leq}\frac{2k}{\pi\sigma^2} 
\end{align*}where $(a)$ follows from the power constraint $\bbE\insq{\inp{\vecA_i^T\bx}^2}\leq k$ (see \eqref{eq:power_constraint}). This holds for any $\bx$, including a randomly chosen sparse vector.
Thus,
\begin{align}
m &\geq  \frac{\pi\sigma^2}{2k}k\log\inp{n/k}\inp{1-\frac{h_2(\delta)+ \delta k\log\inp{n}}{k\log{n/k}}}\nonumber\\
& \geq \sigma^2\log\inp{n/k}\inp{1-\frac{h_2(\delta)+ \delta k\log\inp{n}}{k\log{n/k}}}\label{eq:final_bound_bcs1}
\end{align}
On the other hand, $I(y_i;\bx|\vecA)\leq 1$. Thus,
\begin{align}
k\log&\inp{n/k}\inp{1-\frac{h_2(\delta)+ \delta k\log\inp{n}}{k\log{n/k}}}\nonumber\\
&\leq \sum_{i = 1}^{m}I(y_i;\bx|\vecA)\leq \sum_{i = 1}^{m}H(y_i|\vecA)\nonumber\\
&\leq m.\label{eq:final_bound_bcs2}
\end{align}
Combining \eqref{eq:final_bound_bcs1} and \eqref{eq:final_bound_bcs2}, we get the desired bound.
\end{proof}

\begin{proof}[Proof of Corollary~\ref{thm: lower_bd_log_reg}]
Consider a Gaussian sensing matrix $\vecA$. Suppose $\bx$ is distributed uniformly on the set of all $k$-sparse binary vectors. 

Suppose $t = \frac{1}{2}\tanh{\frac{\beta\vecA_i^T\bx}{2}} \inp{=\frac{(1-e^{-\beta \vecA_i^T\bx}}{2\inp{1+e^{-\beta \vecA_i^T\bx}}}}$. Then, 
\begin{align*}
\frac{1}{1+e^{-\beta \vecA_i^T\bx}} &= \frac{1}{2}+t\text{ and }\\
1-\frac{1}{1+e^{-\beta \vecA_i^T\bx}} &= \frac{1}{2}-t
\end{align*}

With this,
\begin{align*}
\bbE\insq{\inp{g\inp{\vecA_i^T\bx}^2}} &= \bbE\insq{4t^2}\\
& = \bbE\insq{\inp{\tanh{\frac{\beta\vecA_i^T\bx}{2}}}^2}
\end{align*}

Note that,
\begin{align*}
\bbE\insq{\inp{\tanh{\frac{\beta\vecA_i^T\bx}{2}}}^2} = 1-\bbE\insq{\inp{\text{sech}{\frac{\beta\vecA_i^T\bx}{2}}}^2}
\end{align*} and
\begin{align*}
\bbE\insq{\inp{\text{sech}{\frac{\beta\vecA_i^T\bx}{2}}}^2} &= \bbE\insq{\frac{1}{\inp{\text{cosh}{\frac{\beta\vecA_i^T\bx}{2}}}^2}}\\
&\stackrel{(a)}{\geq} \bbE\insq{e^{-\inp{\beta\vecA_i^T\bx/2}^2}}\\
&\stackrel{(b)}{=}\sqrt{\frac{1}{1+\beta^2 k/2}}\\
&\stackrel{(c)}{\geq}1-\frac{\beta^2k}{2}
\end{align*} where $(a)$ follows from the inequality $\text{cosh}(t)\leq e^{t^2/2}$ (see [Exercise 2.2.3]\cite{vershynin}),  $(b)$ follows from \eqref{eq:expectation_log} and $(c)$ holds because $1-\frac{x}{2}\leq \frac{1}{\sqrt{1+x}}$ for any $x\geq 0$.
Thus, 
\begin{align*}
\bbE\insq{\inp{g\inp{\vecA_i^T\bx}^2}} \leq \frac{\beta^2k}{2}.
\end{align*}
This implies that
\begin{align*}
m\frac{\beta^2k}{2} &\geq  k\log\inp{n/k}\inp{1-\frac{h_2(\delta)+ \delta k\log\inp{n}}{k\log{n/k}}}
\end{align*}
Thus,
\begin{align}
m&\geq \frac{2}{\beta^2}\log\inp{n/k}\inp{1-\frac{h_2(\delta)+ \delta k\log\inp{n}}{k\log{n/k}}}\nonumber\\
&\geq \frac{1}{\beta^2}\log\inp{n/k}\inp{1-\frac{h_2(\delta)+ \delta k\log\inp{n}}{k\log{n/k}}}\label{eq:log_final_bound_bcs1}
\end{align}
We also  know that for any $i$, $I(y_i;\bx|\vecA)\leq H(y_i|\vecA)\leq 1$. Thus, we also obtain that
\begin{align}
m\geq k\log\inp{n/k}\inp{1-\frac{h_2(\delta)+ \delta k\log\inp{n}}{k\log{n/k}}}\label{eq:log_final_bound_bcs2}
\end{align}
Combining \eqref{eq:log_final_bound_bcs1} and \eqref{eq:log_final_bound_bcs2}, we get the desired bound.
\end{proof}

\begin{proof}[Proof of Corollary~\ref{thm: spl_lower_bd_1}]
Suppose $\bx$ is generated uniformly at random from the set of all $k$-sparse vectors and $\vecA$ is any sensing matrix which satisfies the power constraint given by \eqref{eq:power_constraint}. Then,
\begin{align*}
I&(y_i;\bx| \vecA) = h(y_i|\vecA)-h(y_i|\bx,\vecA)\\
&\leq \inp{h(y_i)-h(\mathbf{A}_i^T\bx + z_i|\bx,  \vecA)}\\
& = h(y_i) - h(z_i)\\
&\leq \inp{h(w_i)-h(z_i)}
\end{align*} where in the last inequality, $w_i\sim\cN\inp{0, \sigma^2_w}$ where $\mathsf{Var}(y_i)\leq \sigma^2_w$. We will now compute an upper bound on $\mathsf{Var}(y_i)$.
\begin{align*}
\mathsf{Var}(y_i) &\leq \bbE\insq{\inp{\vecA_i^T\bx+z_i}^2} = \bbE\insq{\inp{\vecA_i^T\bx}^2} + \sigma^2\\
&\leq k + \sigma^2
\end{align*} 
Thus, we have
\begin{align}
\inp{h(w_i)-h(z_i)}=\frac{1}{2}\log\inp{\frac{{k}}{\sigma^2}+1}.\label{eq:new2}
\end{align}
With this, we conclude that
\begin{align*}
m&\geq \frac{2k\log\inp{\frac{n}{k}}- h_2(\delta) - \delta k\log{n}}{\frac{1}{2}\log\inp{\frac{{k}}{\sigma^2}+1}}.
\end{align*}

\end{proof}

\subsection{Missing proofs from Section~\ref{sec:tighter_bounds_spl}}\label{proof:MLE}
\begin{proof}[Proof of Theorem~\ref{thm:upper_bd_mle}]
We consider a sensing matrix $\vecA$ where each entry is chosen iid $\cN(0,1)$. let $\cX_k$ denote the set of all $k$-sparse binary vectors. That is $\cX_k = \inb{{\bx'}\in \inb{0,1}^n: \wh{\bx} = k}$.  We  decode to $\hat{\bx}$ if, on output $\by$,
\begin{align*}
\hat{\bx} = \argmax_{\bx'\in \cX_k}\,\, p(\by|{\bx'})
\end{align*} where $p(\by|{\bx'})$ is the probability density function of $\by$ on input $\bx'$. 
 We assume that $k\leq n/2$. Suppose unknown signal is $\bx$. The error event $\cE$ is 
\begin{align*}
\cE = \inb{\by: \exists \tilde{\bx}\neq \bx \text{ such that }p(\by|\tilde{\bx})>p(\by|{\bx})}
\end{align*}
Then
\begin{align*}
\bbP(\cE) \leq \sum_{l = 1}^k\sum_{\substack{\bx'\in \cX_k:\\\h{\bx}{\bx'}= 2l}}\bbP\inp{p(\by|\tilde{\bx})>p(\by|{\bx})}
\end{align*}
Suppose $\bx$ has support on $\cS \subset [1:n], |\cS| = k$ and $\tilde{\bx}$ has support on $\cU \subset [1:n], |\cU| = k$. Then, conditioned on $\bx$, $y_r$ is generated from $\sum_{i\in \cS}A_{r, i}$ which we denote by $A_{r, \cS}$. That is, $y_r = A_{r, \cS} + z_r$ where $A_{r, \cS}\sim \cN(0, k)$. Similarly, conditioned on $\tilde{\bx}$, $y_r = A_{r, \cU} + z_r$ for $A_{r, \cU}:= \sum_{i\in \cU}A_{r, i}$ where $A_{r, \cU}\sim \cN(0, k)$.  For any $l\in [1:k]$, computing $\bbP\inp{p(\by|\tilde{\bx})>p(\by|{\bx})}$, we have
\begin{align*}
\bbP(p(\by|&\tilde{\bx})>p(\by|{\bx}))\\
&= \bbP\inp{\log{\frac{p(\by|\tilde{\bx})}{p(\by|{\bx})}}>0}\\
&=  \bbP\inp{{\sum_{r = 1}^{m}\log{\frac{p(y_r|A_{r, \cU})}{p(y_r|A_{r, \cS})}}}>0}\\
&= \bbP\Bigg(\sum_{r = 1}^{m}(A_{r, \cU}-A_{r, \cS})y_r>\\
&\quad\quad\sum_{r = 1}^{m}(A_{r, \cU}-A_{r, \cS})\frac{(A_{r, \cU}+A_{r, \cS})}{2}\Bigg)
\end{align*}Using the fact that $y_r = A_{r, \cS\setminus\cU}+A_{r, \cS\cap\cU}+z_r$, we have
\begin{align*}
\bbP&\inp{p(\by|\tilde{\bx})>p(\by|{\bx})}\\
&=  \bbP\Big(\sum_{r = 1}^{m}(A_{r, \cU\setminus\cS}-A_{r, \cS\setminus\cU})(A_{r, \cS\setminus\cU}+A_{r, \cS\cap\cU}+z_r)>\\
&\quad\sum_{r = 1}^{m}(A_{r, \cU\setminus\cS}-A_{r, \cS\setminus\cU})\frac{A_{r, \cU\setminus\cS}+A_{r, \cS\setminus\cU}+2A_{r, \cU\cap\cS}}{2}\Big)\\
&=  \bbP\Big(\sum_{r = 1}^{m}(A_{r, \cU\setminus\cS}-A_{r, \cS\setminus\cU})z_r)>\\
&\qquad\sum_{r = 1}^{m}\frac{A^2_{r, \cU\setminus\cS}}{2}-\frac{A^2_{r, \cS\setminus\cU}}{2}-A_{r, \cU\setminus\cS}A_{r, \cS\setminus\cU}+A^2_{r, \cS\setminus\cU}\Big)\\
&=  \bbP\Bigg(\sum_{r = 1}^{m}(A_{r, \cU\setminus\cS}-A_{r, \cS\setminus\cU})z_r)>\\
&\quad\qquad\qquad\frac{\sum_{r = 1}^{m}\inp{A_{r, \cU\setminus\cS}-A_{r, \cS\setminus\cU}}^2}{2}\Bigg)\\
&= \bbP\Bigg(\frac{\sum_{r = 1}^{m}(A_{r, \cU\setminus\cS}-A_{r, \cS\setminus\cU})z_r}{\sqrt{\sum_{r = 1}^{m}\inp{A_{r, \cU\setminus\cS}-A_{r, \cS\setminus\cU}}^2}\sigma})\\
 &\qquad \qquad>\frac{\sqrt{\sum_{r = 1}^{m}\inp{A_{r, \cU\setminus\cS}-A_{r, \cS\setminus\cU}}^2}}{2\sigma}\Bigg).
\end{align*}
Let $b_r = A_{r, \cU\setminus\cS}-A_{r, \cS\setminus\cU}$. Note that $b_r\sim \cN(0, 2l)$. Let $\bb = \inp{b_1,  \ldots, b_m}$. Let $\be = (e_1, \ldots, e_m)$ denote the realization of $\bb$. Then,
\begin{align*}
& \bbP\left(\frac{\sum_{r = 1}^{m}(A_{r, \cU\setminus\cS}-A_{r, \cS\setminus\cU})z_r}{\sqrt{\sum_{r = 1}^{m}\inp{A_{r, \cU\setminus\cS}-A_{r, \cS\setminus\cU}}^2}\sigma}>\right.\\
&\qquad\qquad\qquad\qquad\left.\frac{\sqrt{\sum_{r = 1}^{m}\inp{A_{r, \cU\setminus\cS}-A_{r, \cS\setminus\cU}}^2}}{2\sigma}\right)\\
& = \bbP\inp{\frac{\sum_{r = 1}^{m}b_{r}z_r}{\sqrt{\sum_{r = 1}^{m}\inp{b_{r}}^2}\sigma})>\frac{\sqrt{\sum_{r = 1}^{m}\inp{b_r}^2}}{2\sigma}}\\
& \stackrel{(a)}{=} \int p_{\bb}(\be) \bbP\inp{\frac{\sum_{r = 1}^{m}e_rz_r}{\sqrt{\sum_{r = 1}^{m}\inp{e_r}^2}\sigma})>\frac{\sqrt{\sum_{r = 1}^{m}\inp{e_r}^2}}{2\sigma}}d\be\\
&= \int p_{\bb}(\be)Q\inp{\frac{\sqrt{\sum_{r = 1}^{m}\inp{e_r}^2}}{2\sigma}}d\be
\end{align*} where in $(a)$, $p_{\bb}(\be)$ denotes the density of $\bb$ at $\be$ and $d\be$ is shorthand for $de_1 de_2\ldots de_m$. To analyse this further, we use the upper bound $Q(x)\leq \frac{1}{2}e^{-x^2/2}$.
\begin{align*}
\int &p_{\bb}(\be)Q\inp{\frac{\sqrt{\sum_{r = 1}^{m}\inp{e_r}^2}}{2\sigma}}d\be\\
& = \int\frac{1}{\inp{2\pi\cdot 2l}^{m/2}}2^{\inp{-\frac{\sum_{r=1}^{m}{e_r^2}}{2l}}}2^{\inp{-\frac{{\sum_{r = 1}^{m}{e_r}^2}}{8\sigma^2}}}d\be\\
& = \int\frac{1}{\inp{2\pi\cdot 2l}^{m/2}}2^{\inp{-\sum_{r=1}^{m}{e_r^2}\inp{\frac{1}{2l}+\frac{1}{8\sigma^2}}}}d\be\\
& = \frac{1}{\inp{\frac{1}{2l}+\frac{1}{8\sigma^2}}^{m/2}\inp{2l}^{m/2}}\\
&\quad\int\frac{1}{{\inp{2\pi}^{m/2}}}\inp{\frac{1}{2l}+\frac{1}{8\sigma^2}}^{m/2}2^{\inp{-\sum_{r=1}^{m}{e_r^2}\inp{\frac{1}{2l}+\frac{1}{8\sigma^2}}}}d\be\\
& = \inp{\frac{1}{1+\frac{l}{2\sigma^2}}}^{m/2}\\
& = 2^{\inp{-\frac{m}{2}\log\inp{1+\frac{l}{2\sigma^2}}}}
\end{align*}
Next, we observe that 
\begin{align*}
&\inl{\inb{\bx'\in \cX_k:\h{\bx}{\bx'}= 2l}} = {k\choose l}{n-k \choose l}   \\
&\qquad\stackrel{(a)}{\leq} 2^{kh_2\inp{\frac{l}{k}}}2^{(n-k)h_2\inp{\frac{l}{n-k}}}\\
&\qquad = 2^{n\inp{\frac{k}{n}h_2\inp{\frac{l}{k}} + \frac{(n-k)}{n}h_2\inp{\frac{l}{(n-k)}}}}\\
&\qquad = 2^{nN(l)}.
\end{align*}where $(a)$ uses the inequality ${n \choose k}\leq 2^{nh_2(k/n)}$. 
Then,
\begin{align*}
\bbP(\cE) \leq &\sum_{l = 1}^k2^{nN(l)}2^{\inp{-\frac{m}{2}\log\inp{1+\frac{l}{2\sigma^2}}}}\\
&\rightarrow 0 \text{ if }m\geq \max_l \frac{2nN(l)}{\log\inp{1+\frac{l}{2\sigma^2}}}
\end{align*}

\end{proof}

\begin{proof}[Proof of Theorem~\ref{thm:lower_bd_spl}]
    We consider a joint distribution given by the following process. $\tilde{\bx}$ is generated uniformly at random from the set of all $k$-sparse vectors. Given $\tilde{\bx}$, the unknown signal $\bx$ is chosen uniformly at random from the set of all vectors which are at a Hamming distance $2l$ from $\tilde{\bx}$ for some $l\in [1:k]$ (assuming $k\leq n/2$). 
    We will denote the realization of $\tilde{\bx}$ by $\bar{\bx}$ and the realization of $\bx$ by $\hat{\bx}$. With this, given $\tilde{\bx} = \bar{\bx}$,
\begin{align*}
\bbP\inp{\bx = \hat{\bx}|\tilde{\bx} = \bar{\bx}} = \frac{1}{{k \choose l}{n-k \choose l}}.    
\end{align*}
Note that the marginal distribution of $\bx$ is uniform over the set of all $k$-sparse vectors.

We will be using the below set of equations in our further analysis. For $\bx = \inp{x_1, \ldots, x_n}$,  any $j, l\in \cS_{\tilde{\bx}}$ where $j\neq l$, we have
\begin{align}
&\bbP\inp{x_j = 1|\tilde{\bx} = \bar{\bx}} = \frac{{k-1\choose l}{n-k \choose l}}{{k \choose l}{n-k \choose l}} = \frac{k-l}{k},\label{eq:1}\text{ and }\\
&\bbP\inp{x_j = x_l = 1|\tilde{\bx} = \bar{\bx}} = \frac{{k-2\choose l}{n-k \choose l}}{{k \choose l}{n-k \choose l}} = \inp{\frac{k-l}{k}}\inp{\frac{k-l-1}{k-1}},\label{eq:2}
\end{align}
For any sensing matrix $\vecA$, output vector $\by$ and an unknown signal $\bx$ generated from $\tilde{\bx}$ using the above process, we have
\begin{align}
I&(\vecA, \by;\bx|\tilde{\bx}) = H(\bx|\tilde{\bx})-H(\bx|\vecA, \by,\tilde{\bx})\nonumber\\
&\geq H(\bx|\tilde{\bx})-H(\bx|\vecA, \by)\nonumber\\
 &\stackrel{(a)}\geq H(\bx|\tilde{\bx}) - h_2(\delta) - \delta\log{{n \choose k}}\nonumber\\
 & = \sum_{\tilde{\bx}}\bbP\inp{\tilde{\bx} = \bar{\bx}} H(\bx|\tilde{\bx} = \bar{\bx}) - h_2(\delta) - \delta\log{{n \choose k}}\nonumber\\
 & \stackrel{(b)}{\geq} \sum_{\tilde{\bx}}\bbP\inp{\tilde{\bx} = \bar{\bx}} \log{k \choose l}{n-k \choose l} - h_2(\delta) - \delta k\log{n}\nonumber\\
  & \stackrel{(c)}{\geq} kh_2\inp{\frac{l}{k}} + (n-k)h_2\inp{\frac{l}{n-k}}-\log\inp{k+1}\nonumber\\
  &\qquad-\log\inp{n-k+1} - h_2(\delta) - \delta k\log{n}\nonumber\\
  & \stackrel{(d)}{\geq}nN(l) - 2\log{n}- h_2(\delta) - \delta k\log{n}\label{eq:fano_spl}
\end{align} where $(a)$ follows from [Theorem~2.10.1]\cite{thomas2006elements}, $(b)$ follows from ${n \choose k}\leq n^k$, $(c)$ follows from ${n \choose k}\geq \frac{1}{n+1}2^{nh_2(k/n)}$ ([Theorem 11.1.3]\cite{thomas2006elements}) where $h_2$ is the binary entropy function and $(d)$ follows by defining $N(l) = \frac{k}{n}h_2\inp{\frac{l}{k}} + (1-\frac{k}{n})h_2\inp{\frac{l}{n-k}}$.

Next, we will compute an upper bound on $I(\vecA, \by;\bx|\tilde{\bx})$.
\begin{align*}
I(\vecA, \by;\bx|\tilde{\bx}) &= I(\vecA;\bx|\tilde{\bx}) + I( \by;\bx|\vecA,\tilde{\bx})\\
&\stackrel{(a)}{=} 0 + I( \by;\bx|\vecA,\tilde{\bx})\\
&\stackrel{(b)}{=}\sum_{i=1}^{m}I(y_i;\bx|\vecA, y_{j\in [1:i-1]},\tilde{\bx})
\end{align*} where $(a)$ follows because $\vecA$ is independent of both $\bx$ and $\tilde{\bx}$. In particular, $\vecA$ is conditionally independent of $\bx$ conditioned on $\tilde{\bx}$. Here, $(b)$ follows from chain rule for mutual information where $y_{j\in [1:i-1]}$ denotes $(y_1, \ldots, y_{i-1})$.

Suppose $h(\cdot)$ denotes the differential entropy of a continuous random variable. For any $i\in [1:m]$,
\begin{align*}
I&(y_i;\bx|\vecA, y_{j\in [1:i-1]},\tilde{\bx})\\
& = {h(y_i|\vecA, y_{j\in [1:i-1]},\tilde{\bx})-h(y_i|\bx,\vecA, y_{j\in [1:i-1]},\tilde{\bx})}\\
&\stackrel{(a)}{\leq} {h(y_i|\vecA_{i, j\in \cS_{\tilde{\bx}}}, \tilde{\bx})-h(\vecA_i^{T}\bx + z_i|\vecA,\bx, Y^{i-1},\tilde{\bx})}\\
&={h(y_i|\vecA_{i, j\in \cS_{\tilde{\bx}}}, \tilde{\bx})-h(z_i)}\\
&={h(y_i|\vecA_{i, j\in \cS_{\tilde{\bx}}}, \tilde{\bx})-\frac{1}{2}\log\inp{2\pi e \sigma^2}}
\end{align*} where in $(a)$, we use $\vecA_{i, j\in \cS_{\tilde{\bx}}}$ to denote the set of elements $A_{i, j}$ for $j\in \cS_{\tilde{\bx}}$.
Conditioned on $\tilde{\bx}= \bar{\bx}$ and $ \vecA_{i, j\in \cS_{\tilde{\bx}}} = \mathbf{a}_{i, j\in \cS_{\tilde{\bx}}} $,

\begin{align*}
h&(y_i|\tilde{\bx}= \bar{\bx}, \vecA_{i, j\in \cS_{\tilde{\bx}}}=\mathbf{a}_{i, j\in \cS_{\tilde{\bx}}}) \\
&\stackrel{(a)}{=} h\inp{y_i-\bbE\insq{y_i|\tilde{\bx}= \bar{\bx}, \vecA_{\tilde{\bx}}=\mathbf{a}_{i, j\in \cS_{\tilde{\bx}}}}\Big|\tilde{\bx}= \bar{\bx}, \vecA_{\tilde{\bx}}=\mathbf{a}_{i, j\in \cS_{\tilde{\bx}}}}\\
&\stackrel{(b)}{\leq} h(W_{\mathbf{a}_{i, j\in \cS_{\tilde{\bx}}}})\\
\end{align*} where $(a)$ follows by noting that differential entropy does not change by centering ([Theorem~8.6.3]\cite{thomas2006elements}) and $(b)$ follows for  $W_{i, \tilde{\bx}}\sim\cN\inp{0, \sigma_w^2}$ where $\sigma_w^2\leq \mathsf{Var}(y_i-\bbE\insq{y_i|\tilde{\bx}= \bar{\bx}, \vecA_{i, j\in \cS_{\tilde{\bx}}}=\mathbf{a}_{i, j\in \cS_{\tilde{\bx}}}}\Big|\tilde{\bx}= \bar{\bx}, \vecA_{i, j\in \cS_{\tilde{\bx}}} = \mathbf{a}_{i, j\in \cS_{\tilde{\bx}}})$ from  the fact that for the same variance a Gaussian random variable maximizes the differential entropy and it increasing with increasing variance ([Theorem 8.6.5 and Example 8.1.2]\cite{thomas2006elements}).

Recall that each entry of $\vecA$ is chosen iid $\cN(0,1)$. In that case, 
$\mathsf{Var}\inp{y_i-\bbE\insq{y_i|\tilde{\bx}= \bar{\bx}, \vecA_{i, j\in \cS_{\tilde{\bx}}}=\mathbf{a}_{i, j\in \cS_{\tilde{\bx}}}}}$ conditioned on $\tilde{\bx}= \bar{\bx}$ and $\vecA_{\tilde{\bx}} = \mathbf{a}_{i, j\in \cS_{\tilde{\bx}}}$ is   given by $\bbE\insq{\inp{y_i}^2|\tilde{\bx}= \bar{\bx}, \vecA_{i, j\in \cS_{\tilde{\bx}}} = \mathbf{a}_{i, j\in \cS_{\tilde{\bx}}}} - \inp{\bbE\insq{y_i|\tilde{\bx}= \bar{\bx}, \vecA_{i, j\in \cS_{\tilde{\bx}}} = \mathbf{a}_{i, j\in \cS_{\tilde{\bx}}}}}^2$. We first analyse the first term.
\begin{align*}
 \bbE&\insq{\inp{y_i}^2|\tilde{\bx}= \bar{\bx}, \vecA_{i, j\in \cS_{\tilde{\bx}}} = \mathbf{a}_{i, j\in \cS_{\tilde{\bx}}}} \\
 & = \bbE\insq{\bbE\insq{\inp{\vecA_i^T\bx + z_i}^2|\tilde{\bx}= \bar{\bx},\vecA_{i, j\in \cS_{\tilde{\bx}}} = \mathbf{a}_{i, j\in \cS_{\tilde{\bx}}}, \bx}}
\end{align*}
For any $\bx = \hat{\bx}$,
\begin{align*}
\bbE&\insq{\inp{\vecA_i^T\bx + z_i}^2|\tilde{\bx}= \bar{\bx},\vecA_{\tilde{\bx}} = \mathbf{a}_{i, j\in \cS_{\tilde{\bx}}}, \bx = \hat{\bx}}\\
& \stackrel{(a)}{=} l+\sigma^2 + \inp{\mathbf{a}_{i, S_{\bx}\cap S_{\tilde{\bx}}}}^2
\end{align*} where $(a)$ holds because conditioned on $\vecA_{\tilde{\bx}} = \mathbf{a}_{i, j\in \cS_{\tilde{\bx}}}$ and $\bx = \hat{\bx}$, the random variable $\vecA_i^T\bx + z_i = \mathbf{A}_{i, S_{\bx}\setminus S_{\tilde{\bx}}} + \mathbf{a}_{i, S_{\bx}\cap S_{\tilde{\bx}}} + z_i$ and $|S_{\bx}\setminus S_{\tilde{\bx}}| = l$.

Similarly, 
we can analyze the second term.
\begin{align*}
\bbE&\insq{y_i|\tilde{\bx}= \bar{\bx}, \vecA_{i, j\in \cS_{\tilde{\bx}}} = \mathbf{a}_{i, j\in \cS_{\tilde{\bx}}}} \\
 &= \bbE\insq{\vecA_i^T\bx + z_i|\tilde{\bx}= \bar{\bx}, \vecA_{i, j\in \cS_{\tilde{\bx}}} = \mathbf{a}_{i, j\in \cS_{\tilde{\bx}}}}\\
 & = \bbE\insq{\bbE\insq{\vecA_i^T\bx + z_i|\tilde{\bx}= \bar{\bx}, \vecA_{i, j\in \cS_{\tilde{\bx}}} = \mathbf{a}_{i, j\in \cS_{\tilde{\bx}}},\bx}}
\end{align*}
For any $\bx = \hat{\bx}$,
\begin{align*}
\bbE&\insq{\vecA_i^T\bx + z_i|\tilde{\bx}= \bar{\bx}, \vecA_{i, j\in \cS_{\tilde{\bx}}} = \mathbf{a}_{i, j\in \cS_{\tilde{\bx}}}, \bx = \hat{\bx}}\\
& = \mathbf{a}_{i, S_{\bx}\cap S_{\tilde{\bx}}}
\end{align*} and
\begin{align*}
\bbE\insq{\bbE\insq{\vecA\bx + z_i|\tilde{\bx}= \bar{\bx}, \vecA_{i, j\in \cS_{\tilde{\bx}}} = \mathbf{a}_{i, j\in \cS_{\tilde{\bx}}},\bx}}&= \bbE\insq{\bbE\insq{\mathbf{a}_{i, S_{\bx}\cap S_{\tilde{\bx}}}|\tilde{\bx}= \bar{\bx}, \vecA_{i, j\in \cS_{\tilde{\bx}}} = \mathbf{a}_{i, j\in \cS_{\tilde{\bx}}},\bx}}\\
&\stackrel{(a)}{=} \frac{k-l}{k}\mathbf{a}_{i, S_{\tilde{\bx}}} 
\end{align*}where $(a)$ follows from \eqref{eq:1}.
\begin{align*}
&\inp{\bbE\insq{y_i|\tilde{\bx}= \bar{\bx}, \vecA_{i, j\in \cS_{\tilde{\bx}}} = \mathbf{a}_{i, j\in \cS_{\tilde{\bx}}}}}^2\\
&= \inp{\frac{k-l}{k}}^2\inp{\sum_{j \in \cS_{\tilde{\bx}}}a^2_{i,j} + 2\sum_{\stackrel{j,l\in \cS_{\tilde{\bx}}}{j\neq l}} a_{i,j}a_{i,l}}
\end{align*}
On the other hand,
\begin{align*}
 \bbE&\insq{\inp{y_i}^2|\tilde{\bx}= \bar{\bx}, \vecA_{\tilde{\bx}} = \mathbf{a}_{i, j\in \cS_{\tilde{\bx}}}}\\
 &= \bbE_{\bx}\insq{l+\sigma^2 + \inp{\mathbf{a}_{i, S_{\bx}\cap S_{\tilde{\bx}}}}^2}\\
 & \stackrel{(a)}{=} l + \sigma^2 + \frac{{k-1 \choose k-l-1}}{{k \choose k-l}}\sum_{j \in S_{\tilde{\bx}}}a^2_{i,j} + 2\frac{{k-2 \choose k-l-2}}{{k \choose k-l}}\sum_{\stackrel{j, l\in \cS_{\tilde{\bx}}}{j\neq l}}a_{i,j}a_{i,l}\\
 & = l + \sigma^2 + \frac{k-l}{k}\sum_{j \in S_{\tilde{\bx}}}a^2_{i,j} + 2\inp{\frac{k-l}{k}}\inp{\frac{k-l-1}{k-1}}\sum_{\stackrel{j, l\in \cS_{\tilde{\bx}}}{j\neq l}}a_{i,j}a_{i,l}
\end{align*} where $(a)$ follows from \eqref{eq:1} and \eqref{eq:2}.
Thus,
\begin{align*}
&\mathsf{Var}\inp{y_i-\bbE\insq{y_i|\tilde{\bx}= \bar{\bx}, \vecA_{i, j\in \cS_{\tilde{\bx}}}=\mathbf{a}_{i, j\in \cS_{\tilde{\bx}}}}\Big|\tilde{\bx}= \bar{\bx}, \vecA_{i, j\in \cS_{\tilde{\bx}}} = \mathbf{a}_{i, j\in \cS_{\tilde{\bx}}}}\\
& = l + \sigma^2 + \frac{k-l}{k}\sum_{j \in S_{\tilde{\bx}}}a^2_{i,j} + 2\inp{\frac{k-l}{k}}\inp{\frac{k-l-1}{k-1}}\sum_{\stackrel{j, l\in \cS_{\tilde{\bx}}}{j\neq l}}a_{i,j}a_{i,l}- \inp{\frac{k-l}{k}}^2\inp{\sum_{j \in \cS_{\tilde{\bx}}}a^2_{i,j} + 2\sum_{\stackrel{j,l\in \cS_{\tilde{\bx}}}{j\neq l}} a_{i,j}a_{i,l}}\\
&=  l + \sigma^2 + \inp{\frac{k-l}{k}}\inp{\frac{l}{k}}\sum_{j \in S_{\tilde{\bx}}}a^2_{i,j} -2\frac{k-l}{k}\frac{l}{k\inp{k-1}}\sum_{\stackrel{j,l\in \cS_{\tilde{\bx}}}{j\neq l}} a_{i,j}a_{i,l}
\end{align*}

Thus, 
\begin{align*}
h(y_i&|\vecA_{i, j\in \cS_{\tilde{\bx}}}, \tilde{\bx} =\tilde{\bx}) = \int p_{\vecA}(\ba)h(y_i|\tilde{\bx}= \bar{\bx}, \vecA_{i, j\in \cS_{\tilde{\bx}}}=\mathbf{a}_{i, j\in \cS_{\tilde{\bx}}})da\\
&\leq \int p_{\vecA}(\ba)\frac{1}{2}\log\inp{2\pi e\inp{l + \sigma^2 + \inp{\frac{k-l}{k}}\inp{\frac{l}{k}}\sum_{j \in S_{\tilde{\bx}}}a^2_{i,j} -2\frac{k-l}{k}\frac{l}{k\inp{k-1}}\sum_{\stackrel{j,l\in \cS_{\tilde{\bx}}}{j\neq l}} a_{i,j}a_{i,l}}}d{\ba}\\
&  \stackrel{(a)}{\leq} \frac{1}{2}\log\inp{2\pi e\inp{l + \sigma^2 + \inp{\int p_{\vecA}(\ba)\inp{\frac{k-l}{k}}\inp{\frac{l}{k}}\inp{\sum_{j \in S_{\tilde{\bx}}}a^2_{i,j} -2\frac{k-l}{k}\frac{l}{k\inp{k-1}}\sum_{\stackrel{j,l\in \cS_{\tilde{\bx}}}{j\neq l}} a_{i,j}a_{i,l}}d{\ba}}}}\\
&  \stackrel{(b)}{=} \frac{1}{2}\log\inp{2\pi e\inp{l + \sigma^2 + \inp{\frac{k-l}{k}}l}}\\
\end{align*}where $(a)$ follows from Jenson's inequality and $(b)$ follows by noting that $\bbE\insq{A_{i,j}^2} =1$  and $\bbE\insq{A_{i,j}A_{i,l}} =0$ for any $i$ and $j, l$, where $j\neq l$ and $\tilde{\bx}$ is $k$-sparse.

Thus, 
\begin{align*}
\sum_{i = 1}^mI(y_i;\bx|\vecA, y_{j\in [1:i-1]},\tilde{\bx})&\leq m\frac{1}{2}\log\inp{2\pi e\inp{l + \sigma^2 + \inp{\frac{k-l}{k}}l}} - \frac{1}{2}\log\inp{2\pi e \sigma^2}\\
& = \frac{m}{2}\log{\inp{1+ \frac{l}{\sigma^2}\inp{2-\frac{l}{k}}}}
\end{align*}
Using this and \eqref{eq:fano_spl}, we conclude that
\begin{align*}
m\geq \frac{nN(l) - 2\log{n}- h_2(\delta) - \delta k\log{n}}{\frac{1}{2}\log{\inp{1+ \frac{l}{\sigma^2}\inp{2-\frac{l}{k}}}}}.
\end{align*}

\end{proof}

\section{Comparison with \cite{vershyninPlan}}\label{sec:comparison_PV}
Algorithm~\ref{alg:1} is similar to the two step estimation procedure outlined in \cite{vershyninPlan} which was given to estimate the unknown signal within a two norm guarantee. 
Computing the vector $\mathbf{l} = \inp{l_1, \ldots, l_n}$ is the same as the first step of the procedure in [Section~1.2]\cite{vershyninPlan} where a linear estimator is computed. The second step of our algorithm (sorting and keeping the top-$k$ indices) can be thought of as a projection on a feasible set [Section~1.3]\cite{vershyninPlan}. However, this requires the estimation error to be small enough for the exact recovery of a binary vector.

The setup in \cite{vershyninPlan} is for the recovery of an unknown signal with small two-norm error, whereas our problem of exact recovery of a sparse binary vector is more suited for recovery under  infinity norm. This results in weak bounds ($m\approx O(k^2)$) when we specialize various results in \cite{vershyninPlan} to our case. We first note that we require $\bbE\norm{\frac{\hat{x}}{\norm{\hat{x}}}-\bar{x}}< \sqrt{\frac{{2}}{{k}}}$ for exact recovery. Otherwise, there exist two binary $k$-sparse vectors which have hamming distance at least two. 

We first consider the 1-bit compressed sensing result in Section 3.5 (page 13). Setting the LHS to $\sqrt{\frac{{2}}{{k}}}$, we get
\begin{align*}
\sqrt{\frac{{2}}{{k}}}\leq C\sqrt{\frac{k\log\inp{2n/k}}{m}}.
\end{align*}
This implies that $m\approx C_1 k^2\log\inp{2n/k}$ for some constant $C_1$.

Next, we consider [Theorem 9.1]\cite{vershyninPlan}. Note that for 1-bit compressed sensing $\eta^2 = 1$ and 
\begin{align*}
\mu &= \bbE\insq{s_1\ipr{a_1}{\bar{x}}}\\
& =\bbE\insq{s_1\ipr{a_1}{{x}}}\\
& \stackrel{(a)}{=} \frac{1}{\sqrt{k}}\sqrt{\frac{2}{\pi}}\times\frac{k}{\sqrt{\inp{k+\sigma^2}}}\\
&= \sqrt{\frac{2}{\pi}}\times\frac{\sqrt{k}}{\sqrt{\inp{k+\sigma^2}}}.
\end{align*} where $(a)$ follows from \eqref{eq:expt4}. 
Then, 
\begin{align*}
\norm{x-\mu\bar{x}} &= \norm{x-\sqrt{\frac{2}{\pi}}\times\frac{\sqrt{k}}{\sqrt{\inp{k+\sigma^2}}}\frac{x}{\sqrt{k}}}\\
& = \norm{x-\sqrt{\frac{2}{\pi}}\times\frac{x}{\sqrt{\inp{k+\sigma^2}}}}
\end{align*} We require $\norm{x-\mu\bar{x}}<\frac{2}{\sqrt{\pi\inp{k+\sigma^2}}}$ in order to exactly recover the unknown signal $x$.

We assume that $K$ is also a closed cone in $\bbR^n$. Then, by [Section~2.4]\cite{vershyninPlan}, $w_t(K) = tw_1(K) \leq t C\sqrt{k\log\inp{2n/k}}$ ([Section~2.4]\cite{vershyninPlan}). We choose $s = w_1(K)$. Substituting the bound for LHS and taking the limit $t\rightarrow 0$, we get
\begin{align*}
\frac{2}{\sqrt{\pi\inp{k+\sigma^2}}}\leq \frac{8 C \sqrt{k\log\inp{2n/k}}}{\sqrt{m}}.
\end{align*} Thus, $m\approx 4C(k+ \sigma^2)k\log\inp{2n/k}$.



\end{document}